\documentclass{article}

\PassOptionsToPackage{numbers,compress}{natbib}
\usepackage[preprint]{neurips_2026}

\usepackage[utf8]{inputenc}
\usepackage[T1]{fontenc}
\usepackage{hyperref}
\usepackage{url}
\usepackage{booktabs}
\usepackage{array}
\usepackage{multirow}
\usepackage{graphicx}
\usepackage{amsfonts}
\usepackage{amsmath}
\usepackage{amssymb}
\usepackage{amsthm}
\usepackage{mathtools}
\usepackage{microtype}
\usepackage{xcolor}
\usepackage{enumitem}

\theoremstyle{definition}

\newtheorem{assumption}{Assumption}[section]
\newtheorem{lemma}{Lemma}[section]
\newtheorem{proposition}{Proposition}[section]
\newtheorem{theorem}{Theorem}[section]

\newcommand{\bits}{\mathbf{b}}
\newcommand{\loss}{\Delta Q}
\newcommand{\cfgpred}{\widehat{\Delta Q}_{\mathrm{cfg}}}
\newcommand{\reqpred}{\widehat{\Delta Q}_{\mathrm{req}}}
\newcommand{\losshat}{\widehat{\ell}}
\newcommand{\affinity}{\rho}
\newcommand{\experts}{\mathcal{E}}
\newcommand{\instances}{\mathcal{A}}
\newcommand{\classes}{\mathcal{G}}
\newcommand{\FWP}{\mathrm{FWP}}
\newcommand{\Regret}{\mathrm{Regret}}

\title{Quality-Constrained Routing over a Fixed Pool of Quantized Mixture-of-Experts Instances}
\author{%
  Zhenghong Huang,\quad
  Hongfan Wu$^{*}$,\quad
  Jiheng Zhang \\
  \\
  The Hong Kong University of Science and Technology\\
  Clear Water Bay, Kowloon, Hong Kong\\
  \\
  \texttt{\{zhuangdr, hwucn\}@connect.ust.hk,\;\; jiheng@ust.hk}
}

\begin{document}

\maketitle

\begin{abstract}
% S1: setting and empirical phenomenon; opener.
Quantized Mixture-of-Experts (MoE) services can hold several pre-materialized instances of one base model, but quantization damage varies sharply across requests and bitwidths.
% S2: scope boundary; connector names the upstream decision.
Because instance materialization and replica counts consume memory and require slow reconfiguration, we treat them as upstream provisioning decisions and study routing within a fixed resident pool.
% S3: optimization objective; connector names the fixed-pool boundary.
Within this fixed-pool boundary, we route each request to maximize modeled throughput under a class-level expected quality-degradation budget and measured instance capacities.
% S4: operational signal; connector names the request-risk need.
To predict this request-specific risk, we introduce FWP (Fragility-Weighted Perplexity), computed from prompt tokens on a reference-instance prefill and calibrated to candidate-instance degradation.
% S5: theory scope; connector names the FWP mechanism.
Underlying FWP is an exact two-expert affinity--fragility decomposition and a conditional multi-layer top-$k$ expansion whose bias, interaction, route-change, separability, and higher-order terms remain explicit.
% S6: optimization result; connector names calibrated risks.
Using these calibrated risks, a window-level linear program yields a signed reduced-reward score that is KKT-consistent with the LP optimum under optimal prices and primal-feasible tie allocation.
% S7: complete-instance quality evidence; connector names evaluation.
On 88 extended Qwen prompts, complete W2, W3, and W4 instances quantizing all 6,144 expert blocks incur mean $\Delta$NLL of $0.9437$, $0.1832$, and $0.0513$.
% S8: headline attribution; connector names the common population.
Under the same population and $\tau=0.1513$, FWP allocation reaches a $1.284\times$ offline model-based multiplier versus $1.253\times$ for request-agnostic mixing and $1.000\times$ for static W4, an incremental $2.5\%$ relative FWP gain.
\end{abstract}

\section{Introduction}
Sparse MoE models provide high parameter capacity while activating only a subset of experts per token~\citep{shazeer2017outrageously,lepikhin2021gshard,fedus2022switch,jiang2024mixtral,dai2024deepseekmoe,qwen3technical2025}.
Yet serving still pays to keep expert weights resident, which makes post-training expert quantization an important memory and throughput tool~\citep{frantar2023gptq,lin2024awq,badri2023hqq,frantar2024qmoe,kim2023moqe}.
Complete-instance measurements in this paper show that the resulting quality cost is heterogeneous: on the same 88 Qwen requests, mean $\Delta$NLL is $0.9437$, $0.1832$, and $0.0513$ for W2, W3, and W4, while the W3 median is only $0.1387$.
This gap between population means and request-level outcomes creates an allocation opportunity once several pre-materialized instances are already resident.

This paper studies the conditional routing problem opened up by this heterogeneity.
At serving time, a fixed pool $\instances$ of pre-materialized quantized model copies is held resident, each frozen at a quantization configuration of the same MoE base; we call these copies \textit{instances}.
This raises the central question of this paper:
\begin{center}
\textit{How should a serving system route each incoming request among the pool's instances to maximize throughput under a per-class quality-degradation budget?}
\end{center}
The instance configurations and replica counts are upstream provisioning decisions: they determine parameter residence, consume memory, and may require weight loading or placement changes.
During a routing epoch they are immutable, so adding replicas changes the input pool rather than supplying another per-request action.
We formalize this two-timescale boundary in Section~\ref{sec:background}: an outer memory-feasible provisioning problem may choose any replica vector and pay reconfiguration cost, while the contribution of this paper is the inner quality-constrained routing problem conditional on that vector.
This scope does not assert that the evaluated W2/W3/W4 pool is the globally best use of a fixed fleet budget.
It is consistent with production MoE practice: DeepSeek-V3 reports adjusting redundant high-load experts from online statistics periodically, for example every ten minutes, while request dispatch occurs at a much faster timescale~\citep{deepseekv3,deepseekeplb}.

Adaptive routing requires a prompt-time signal correlated with candidate-instance quality loss.
We use \textbf{expert fragility} for the request-independent sensitivity of an expert to quantization~\citep{chowdhury2026efficient} and \textbf{expert affinity} for how strongly a request activates that expert~\citep{shazeer2017outrageously,fedus2022switch}.
The exact two-expert model factors population loss into affinity and intrinsic degradation; the realistic multi-layer top-$k$ bridge is instead a conditional blockwise Taylor expansion with explicit bias, interaction, separability, route-change, and remainder terms.
We operationalize the mechanism through \textbf{FWP (Fragility-Weighted Perplexity)}, a request-local weighted prompt NLL computed on a designated reference instance.
Offline calibration maps this raw prompt-time statistic, together with declared request features such as prefix length, to candidate-instance degradation; it does not use an unavailable full-precision score for the unseen request.
If the selected target differs from the reference instance, the execution path performs and charges a second prefill because KV state is generally not transferable across differently quantized instances.

We convert calibrated risk into allocation through a \textbf{window-level linear program} over the fixed pool.
The LP maximizes served tokens subject to per-instance capacity and per-class expected quality-degradation constraints.
Its dual yields a signed reduced-reward score with one capacity price and one class-quality price.
At optimal dual prices, every route used by a primal optimum maximizes this score; resolving ties according to a primal-optimal allocation makes the score \textbf{KKT-consistent with the LP optimum}.
An arbitrary greedy tie rule guarantees stationarity only, so feasibility and price convergence remain explicit conditions rather than deployment guarantees.

The empirical evidence is intentionally separated by protocol (Section~\ref{sec:results}).
Complete Qwen measurements quantize all 6,144 expert blocks and supply the quality backbone; DeepSeek-V2-Lite remains sampled diagnostic evidence and therefore does not support a complete-instance cross-architecture claim.
On the complete Qwen population at $\tau=0.1513$, static W4 reaches a $1.000\times$ offline model-based multiplier, request-agnostic budget-aware mixing reaches $1.253\times$, FWP reaches $1.284\times$, and a restricted true-loss ordering reaches $1.303\times$.
Thus heterogeneous budget allocation explains $0.253$ of the $0.284$ improvement over static W4, while FWP contributes $0.031$ multiplier points, or $2.5\%$ relative to request-agnostic mixing.
Prompt-only cross-instance FWP evidence is moderate (Spearman $0.586$, AUROC $0.747$), and prefix correlation ranges from $0.238$ at 64 tokens to $0.999$ at 1,024 tokens; these results motivate length-aware abstention rather than a universal deployment guarantee.

Our contributions are:

\begin{enumerate}[leftmargin=*, itemsep=1pt, topsep=3pt]
\item \textbf{A fixed-pool problem formulation with an explicit provisioning boundary.}
We formalize memory-feasible replica selection and reconfiguration cost at a slow timescale, then define quality-constrained request routing over the resulting immutable pool as the paper's decision problem.

\item \textbf{A claim-controlled theory and request-risk signal.}
We separate the exact two-expert affinity--fragility identity, the centered $2^{-2b}$ reduced form, the conditional multi-layer top-$k$ extension, and the within-request covariance condition under which FWP amplifies degradation.
The LP result applies to the signed risk formulation and is KKT-consistent with the LP optimum only under its stated price, tie-allocation, and feasibility conditions.

\item \textbf{Complete-instance evidence with gain attribution.}
For 88 extended Qwen requests and complete W2/W3/W4 instances, we report the corrected $1.284\times$ offline model-based allocation multiplier and show that FWP adds $2.5\%$ relative over the $1.253\times$ request-agnostic feasible mixture.
We separately report prompt-length, calibration, memory, endpoint, and controller evidence without merging them into a live equal-resource throughput claim.
\end{enumerate}
\section{Background and Problem Setting}
\label{sec:background}

\subsection{Fixed-Pool Routing and the Provisioning Boundary}

\paragraph{Notation.}
We consider an autoregressive MoE model with $L$ sparse layers. At token $t$, layer $\ell$ routes the hidden state $h_t$ through a top-$k$ router with gate weights $g_{t,e}$ to an expert subset $\mathcal{K}_t \subseteq \experts_\ell$, producing $y_t = \sum_{e \in \mathcal{K}_t} g_{t,e} f_e(h_t)$. Only the expert weights $(W_{1,e}, W_{2,e})$ are quantized; the router stays at reference precision, and a quantization configuration assigns bitwidth $b_e$ to each expert $e$. This asymmetry is the architectural lever: routing is observable in real time, while quantization damage concentrates on a small, request-dependent subset of experts. For a request $x$ with token set $\mathcal{T}(x)$, per-expert affinity is
\begin{equation}
\affinity_e(x) = \frac{1}{|\mathcal{T}(x)|} \sum_{t \in \mathcal{T}(x)} g_{t,e},
\label{eq:affinity-def}
\end{equation}
turning a per-expert quantization choice into a per-request quality risk.

\paragraph{Two decision timescales.}
Let $\overline{\instances}$ be a catalog of quantization configurations that could be materialized, and let provisioning epoch $q$ last $H_q$ routing windows.
The integer $y_a^{(q)}\ge 0$ records how many resident replicas of catalog configuration $a$ are available during epoch $q$.
If $m_a$ is its parameter-memory footprint, $M_{\mathrm{sh}}$ is memory shared across the service, and $\overline M$ is the provisioning budget, the feasible replica vectors satisfy
\begin{equation}
\mathcal Y
=
\left\{y\in\mathbb Z_+^{|\overline{\instances}|}:
M_{\mathrm{sh}}+\sum_{a\in\overline{\instances}}m_a y_a\le \overline M\right\}.
\label{eq:memory-feasible-provisioning}
\end{equation}
The resident endpoint pool is
\begin{equation}
\instances\!\left(y^{(q)}\right)
=
\left\{(a,r):a\in\overline{\instances},\ r=1,\ldots,y_a^{(q)}\right\},
\label{eq:resident-pool}
\end{equation}
so identical replicas are distinct capacity endpoints even though they share a quantization configuration.

\paragraph{Reconfiguration cost.}
Changing the resident vector may require loading, evicting, or relocating weights.
We represent this slower cost as
\begin{equation}
C_{\mathrm{reconf}}\!\left(y^{(q)},y^{(q-1)}\right)
=
\sum_{a\in\overline{\instances}}
\left[
c_a^{\mathrm{on}}\bigl(y_a^{(q)}-y_a^{(q-1)}\bigr)_+
+c_a^{\mathrm{off}}\bigl(y_a^{(q-1)}-y_a^{(q)}\bigr)_+
\right].
\label{eq:reconfiguration-cost}
\end{equation}
An upstream provisioner could therefore compare replica fleets through the outer problem
\begin{equation}
\max_{y^{(q)}\in\mathcal Y}
\left\{
V_q^{\star}\!\left(\instances(y^{(q)})\right)
-\frac{C_{\mathrm{reconf}}(y^{(q)},y^{(q-1)})}{H_q}
\right\},
\label{eq:outer-provisioning-boundary}
\end{equation}
where $V_q^{\star}(\cdot)$ is the optimal value of the inner routing problem.
Equation~\eqref{eq:outer-provisioning-boundary} defines the boundary but is not optimized by our algorithm: throughout a routing epoch, $y^{(q)}$ and hence $\instances=\instances(y^{(q)})$ are fixed.

\paragraph{Routing problem.}
Requests arrive with class $g_t$, expected output-token count $L_{g_t}$, profiled service cost, and quality tolerance $\tau_{g_t}$.
A policy maps each complete request to one resident endpoint $a_t\in\instances$; it does not quantize, load, evict, or create replicas online.
The objective is to maximize served tokens subject to per-instance capacity and per-class expected quality degradation, formalized by the window-level LP in Section~\ref{sec:theory-lp}.
This contract makes the replica comparison precise: a different replica vector changes the outer input, whereas the contribution evaluated here is allocation conditional on the same fixed pool.

\paragraph{Operational precedent and limitation.}
DeepSeek-V3 provides public evidence that MoE placement can run at a slower cadence than request dispatch: its deployment duplicates high-load experts, detects them from online statistics, and adjusts the redundant set periodically, for example every ten minutes~\citep{deepseekv3}.
The official EPLB implementation computes expert replication and GPU-placement plans from estimated loads~\citep{deepseekeplb}.
This evidence supports a two-timescale abstraction for expert placement; it does not measure whole-instance loading cost, prove that ten minutes is optimal, or establish the memory feasibility of our W2/W3/W4 pool.

\subsection{Related Work}
\label{sec:related}

Our work sits at the junction of four lines, with one ancillary connection. First, sparse MoE models---from gated experts \citep{shazeer2017outrageously} through GShard and Switch Transformers \citep{lepikhin2021gshard,fedus2022switch} to Mixtral, DeepSeekMoE, and Qwen MoE \citep{jiang2024mixtral,dai2024deepseekmoe,qwen2moe2024,qwen3technical2025}---supply the architectural substrate. Second, post-training quantization \citep{frantar2023gptq,lin2024awq,xiao2023smoothquant,badri2023hqq} and MoE-specific compressors \citep{frantar2024qmoe,kim2023moqe,chowdhury2026efficient} produce candidate instances; the per-expert router-norm fragility indicator of \citet{chowdhury2026efficient} motivates the request-level FWP signal. Third, serving and placement systems \citep{yu2022orca,kwon2023pagedattention,zhong2024distserve,sun2024llumnix,deepseekv3} optimize batching, capacity, or expert placement. We take their provisioned endpoints as input and add a request-level quantization-risk term, so replica selection and routing are complementary rather than competing claims. Fourth, cost-aware routing across heterogeneous models \citep{chen2023frugalgpt,ong2024routellm,hu2024routerbench,ding2024hybridllm} selects different models rather than quantization instances of one base. Selective prediction \citep{guo2017calibration,geifman2017selective} is an ancillary connection because the calibrated object is request-level degradation. Appendix~\ref{sec:related-extended} expands this positioning.

\section{Theoretical Framework}
\label{sec:theory}

This section separates three theoretical objects that the earlier presentation conflated. First, the two-expert sparse-MoE model gives an exact affinity--fragility identity and a centered-noise $2^{-2b}$ expansion. Second, a within-request covariance identity states exactly when the deployed FWP weighting amplifies token degradation; the two-expert top-$1$ model alone cannot prove that property because its within-request weights collapse. Third, the fixed-pool LP yields a signed reduced-reward score whose KKT interpretation requires optimal prices and primal-feasible tie allocation. Appendix~\ref{app:qb-proof} gives the local derivations and a conditional multi-layer top-$k$ bridge with explicit correction terms.

\subsection{Expert Fragility and the Reduced-Form Surrogate}
\label{sec:theory-sensitivity}

We begin with an explicit model that isolates the structure we want to exploit. The full derivation, with all intermediate steps, appears in Appendix~\ref{app:qb-proof}; here we present the key objects and their theoretical guarantees.

\paragraph{Two-expert sparse-MoE model.}
Consider a sparse top-1 MoE classifier with two experts, $e_r$ and $e_c$. Each request belongs to a rare population (mass $\alpha < \frac{1}{4}$) routed to $e_r$, or a common population (mass $1-\alpha$) routed to $e_c$. The router is deterministic: $g(x) = e_r$ if $X=r$, else $e_c$. Let $m_j > 0$ be the full-precision signed classification margin for population $j \in \{r,c\}$, and assume $0 < m_r < m_c$: the rare expert has a weaker activation margin, reflecting the fragility mechanism in the literature~\citep{chowdhury2026efficient}. The prediction loss is logistic, $\ell(z) = \log(1 + e^{-z})$, and full-precision quality is $Q_{\mathrm{full}} := -\mathbb{E}[\ell(m_X)]$.

When each expert $e_j$ is quantized to $b_j$ bits, the margin is perturbed by a zero-mean random variable $\xi_j(b_j)$ with variance scaling $v_j 2^{-2b_j}$ (the standard uniform-quantization scaling). The quantized quality is $Q(\bits) := -\mathbb{E}[\ell(m_X + \xi_X(b_X))]$, and the quality loss is $\loss(\bits) := Q_{\mathrm{full}} - Q(\bits)$.

\paragraph{Exact affinity--fragility decomposition.}
Conditioning on the request type gives the exact decomposition
\begin{equation}
\loss(\bits) = \alpha\,\Delta_r(b_r) + (1-\alpha)\,\Delta_c(b_c),
\label{eq:model-mixture-main}
\end{equation}
where $\Delta_j(b_j) := \mathbb{E}[\ell(m_j + \xi_j(b_j)) - \ell(m_j)]$. This cleanly factors the loss into two objects: the coefficients $\alpha$ and $1-\alpha$ are \emph{affinity} weights (how often each expert is used), while $\Delta_r$ and $\Delta_c$ are \emph{intrinsic fragility} functions (how much damage quantization does to each expert). These two quantities should not be combined, and the routing score we derive in Section~\ref{sec:theory-lp} inherits both.

\paragraph{Small-noise expansion and reduced form.}
To obtain a practical parameterization, we expand each $\Delta_j(b_j)$ around the full-precision margin. Under centered noise with variance proportional to $2^{-2b_j}$, the first-order term vanishes and the leading term is second order (Appendix~\ref{app:qb-proof}, Lemma~\ref{lem:model-local-expansion}). Biased deterministic quantization instead has a separate signed first-order coefficient multiplying $2^{-b_j}$; it is not the same positive fragility coefficient.

\begin{theorem}[Centered-noise reduced form]
\label{thm:model-reduced-form-main}
Under the two-expert model, zero-mean perturbations, and the moment bounds in Appendix~\ref{app:qb-proof},
\begin{equation}
\loss(\bits) = \alpha\, c_r\,2^{-2b_r} + (1-\alpha)\, c_c\,2^{-2b_c} + r(\bits),
\label{eq:model-reduced-form-main}
\end{equation}
where $c_j = \frac{1}{2}\ell''(m_j)v_j > 0$ is an expert-specific fragility coefficient and the remainder satisfies $|r(\bits)| \le \kappa\sum_j 2^{-3b_j}$ for a constant $\kappa$ depending on local curvature and third moments.
\end{theorem}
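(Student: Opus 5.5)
The plan is to start from the exact decomposition \eqref{eq:model-mixture-main} and expand each conditional increment $\Delta_j(b_j)=\mathbb{E}[\ell(m_j+\xi_j(b_j))-\ell(m_j)]$ separately. Since the affinity weights $\alpha$ and $1-\alpha$ enter linearly, it suffices to show, for each $j\in\{r,c\}$,
\[
\Delta_j(b_j)=c_j\,2^{-2b_j}+r_j(b_j),\qquad |r_j(b_j)|\le \kappa_j\,2^{-3b_j}.
\]
Then $r(\bits):=\alpha r_r(b_r)+(1-\alpha)r_c(b_c)$ satisfies the claimed bound with $\kappa:=\max_j\kappa_j$, because $\alpha,1-\alpha\le 1$.

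For a fixed $j$, I would apply Taylor's theorem with Lagrange remainder to the logistic loss around $m_j$:
\[
\ell(m_j+\xi)-\ell(m_j)=\ell'(m_j)\xi+\tfrac12\ell''(m_j)\xi^2+\tfrac16\ell'''(\tilde m)\xi^3
\]
for some $\tilde m$ between $m_j$ and $m_j+\xi$. Taking expectations, the linear term vanishes because $\mathbb{E}[\xi_j(b_j)]=0$. This is exactly where centering is used; the paper's remark on biased quantization is the case where this term survives with a signed $2^{-b_j}$ coefficient. The quadratic term equals $\tfrac12\ell''(m_j)v_j2^{-2b_j}=c_j2^{-2b_j}$. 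Positivity of $c_j$ follows from $\ell''(z)=\sigma(z)(1-\sigma(z))>0$ together with $v_j>0$.

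The remainder is the main obstacle. It is the only place the ``moment bounds in Appendix~\ref{app:qb-proof}'' enter. I would use that $\ell'''$ is globally bounded for the logistic loss: $\ell'''=\ell''(1-2\sigma)$, so $|\ell'''|\le 1/4$. This avoids any localization argument and gives
\[
|r_j|\le \tfrac1{24}\,\mathbb{E}|\xi_j(b_j)|^3 .
\]
The needed moment assumption is therefore third-moment scaling of the form $\mathbb{E}|\xi_j(b_j)|^3\le \mu_j 2^{-3b_j}$. This holds, for example, if $\xi_j(b_j)=2^{-b_j}\eta_j$ with $\mathbb{E}|\eta_j|^3<\infty$, which is the natural uniform-quantization model consistent with the stated variance scaling. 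I would state this explicitly as the assumed moment bound, since variance scaling alone does not control the cubic term.

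With that assumption, $\kappa_j=\mu_j/24$. If one wants $\kappa$ expressed through local curvature instead, one can replace the global bound by $\sup_{|z-m_j|\le\delta}|\ell'''(z)|$ on the event $|\xi|\le\delta$, and handle the tail with a Markov-type bound. The global-bound route is simpler, so I would use it. Summing the two experts with their affinity weights then yields \eqref{eq:model-reduced-form-main}.
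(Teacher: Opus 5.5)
Your proposal is correct and follows the paper's route: the exact mixture decomposition, a pointwise Taylor expansion with Lagrange remainder, centering to remove the linear term, and the assumed third-moment bound $\mathbb{E}|\xi_j|^3\le\chi_j 2^{-3b_j}$ to control the cubic remainder. The only difference is the curvature constant. You bound $|\ell'''|$ globally by $1/4$, whereas the paper uses the local supremum $L_j=\sup_{z\ge m_j-\bar\delta_j2^{-b_j}}|\ell'''(z)|$; your choice removes the need for the almost-sure bound $|\xi_j|\le\bar\delta_j2^{-b_j}$ and the positivity of $m_j-\bar\delta_j2^{-b_j}$, at the cost of a cruder, margin-independent constant.
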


Theorem~\ref{thm:model-reduced-form-main} justifies the $2^{-2b}$ empirical family under its stated stochastic assumptions. Appendix Lemma~\ref{lem:model-first-order} separately gives the biased $2^{-b}$ branch with a signed coefficient. The fragility coefficient in the centered branch is controlled by a monotone decreasing upper bound in the expert's router-derived margin proxy $\widetilde{\Lambda}_j$ (Appendix Proposition~\ref{cor:model-structured}); deterministic HQQ or RTN checkpoints are empirical tests, not automatic instances of the centered-noise theorem.

\paragraph{Empirical surrogate and request-level lift.}
Guided by the two separate expansions, the configuration-level models fitted to real perturbation data are
\begin{equation}
\cfgpred(\bits) = \sum_{e \in \experts} \widehat{c}^{\mathrm{free}}_e\,\phi(b_e),
\label{eq:reduced-form}
\end{equation}
where $\widehat{c}^{\mathrm{free}}_e$ is calibrated by leave-one-expert-out perturbations and the $2^{-b}$ and $2^{-2b}$ families are fitted and reported as distinct empirical specifications. A restricted structured family uses router norm and MaxVar (Appendix~\ref{app:qb-proof}). These sampled-expert regressions diagnose ranking structure; they are not the complete-instance quality evidence used for the headline allocation result.

The configuration-level form motivates the following request-level surrogate:
\begin{equation}
\reqpred(x, \bits) = \sum_{e \in \experts} \affinity_e(x)\,\widehat{c}_e\,\phi(b_e),
\label{eq:request-lift}
\end{equation}
Here affinity is observable from gates and the fitted fragility coefficient is request-independent. In a realistic multi-layer top-$k$ network, Eq.~\eqref{eq:request-lift} requires an additional separability approximation between exact block sensitivity and mean-gate affinity; the conditional extension in Appendix~\ref{app:qb-proof} retains its residual together with bias, interaction, route-change, and Taylor terms. Affine transfer is therefore an empirical calibration for the fixed evaluated instances, not a theorem for arbitrary expert-wise configurations.

\subsection{FWP: Fragility-Weighted Perplexity}
\label{sec:theory-fwp}

The lifted surrogate in Eq.~\eqref{eq:request-lift} is a metric-agnostic predictor. The routing gain, however, depends on how strongly the quality observable separates fragile-expert degradation from bulk-token noise. We now construct a concrete observable---FWP---and prove that it amplifies the signal of interest.

\paragraph{Design.}
We design FWP from the router-norm fragility signal in prior work~\citep{chowdhury2026efficient}. For the non-padding prompt-token index set $I_p(x)$, let $\bar{\Lambda}_i(x)$ be the mean router norm of experts activated by token $i$. Define weights normalized within the request,
\begin{equation}
w_i(x) = \frac{(\bar{\Lambda}_i(x) + \epsilon_w)^{-1}}{\frac{1}{|I_p(x)|}\sum_{j\in I_p(x)}(\bar{\Lambda}_j(x) + \epsilon_w)^{-1}},
\label{eq:fwp-weights}
\end{equation}
with $\epsilon_w > 0$, and the log-FWP observable
\begin{equation}
\ell_{\FWP}(x; \bits) = \frac{1}{|I_p(x)|}\sum_{i\in I_p(x)} w_i(x)\bigl(-\log p_{\bits}(x_i \mid x_{<i})\bigr).
\label{eq:fwp-def}
\end{equation}
The statistic is request-local: tokens from other requests and padding positions do not enter either sum. For an unseen request, a reference-instance prefill produces the raw value in Eq.~\eqref{eq:fwp-def}; an offline calibration map uses that value, prompt length, and declared class features to predict candidate-instance degradation. No full-precision score for the same unseen request is available online.

\paragraph{Within-request amplification identity.}
For an offline matched pair of a quantized and reference checkpoint, define token degradation
$d_i(x;\bits)=-\log p_{\bits}(x_i\mid x_{<i})+\log p_{\mathrm{full}}(x_i\mid x_{<i})$.
The audit-only FWP increment is
\begin{equation}
\Delta \ell_{\FWP}^{\mathrm{audit}}(x;\bits)
=\frac{1}{|I_p(x)|}\sum_{i\in I_p(x)}w_i(x)d_i(x;\bits),
\label{eq:fwp-delta}
\end{equation}
while $\Delta\overline\ell(x;\bits)=|I_p(x)|^{-1}\sum_i d_i(x;\bits)$ is ordinary prompt-average $\Delta$NLL.

\begin{proposition}[Within-request FWP amplification]
\label{prop:model-fwp-amplification-main}
For every request with finite token degradation and the normalized weights in Eq.~\eqref{eq:fwp-weights},
\begin{equation}
\Delta \ell_{\FWP}^{\mathrm{audit}}(x;\bits)-\Delta\overline\ell(x;\bits)
=\operatorname{Cov}_{i\in I_p(x)}\!\left(w_i(x),d_i(x;\bits)\right).
\label{eq:fwp-covariance-main}
\end{equation}
Consequently, FWP amplifies average degradation exactly when inverse-router-norm weights have positive within-request covariance with token degradation.
\end{proposition}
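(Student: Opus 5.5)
The plan is a direct algebraic identity using the uniform empirical distribution on $I_p(x)$. Write $n=|I_p(x)|$ and define the empirical mean $\mathbb{E}_i[z_i]=n^{-1}\sum_{i\in I_p(x)} z_i$ and empirical covariance $\operatorname{Cov}_i(w_i,d_i)=\mathbb{E}_i[w_i d_i]-\mathbb{E}_i[w_i]\,\mathbb{E}_i[d_i]$. This is the (population, $1/n$-normalized) covariance that the statement refers to. We must fix this convention explicitly, since with the $1/(n-1)$ sample convention the identity would only hold up to the factor $(n-1)/n$.

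First, we show the weights in Eq.~\eqref{eq:fwp-weights} have unit empirical mean. The weights are well defined: router norms are nonnegative and $\epsilon_w>0$, so each $(\bar\Lambda_i+\epsilon_w)^{-1}$ is finite and strictly positive, and the denominator is positive. Summing the numerators over $i$ and dividing by $n$ reproduces the denominator, so $\mathbb{E}_i[w_i]=1$.

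Second, we identify the two quantities in the statement with empirical moments. By Eq.~\eqref{eq:fwp-delta}, $\Delta\ell_{\FWP}^{\mathrm{audit}}=\mathbb{E}_i[w_i d_i]$. By definition, $\Delta\overline\ell=\mathbb{E}_i[d_i]$. Finiteness of each $d_i$ ensures every sum is a finite real number. Substituting $\mathbb{E}_i[w_i]=1$ into the covariance formula then gives
\[
\operatorname{Cov}_i(w_i,d_i)=\mathbb{E}_i[w_id_i]-\mathbb{E}_i[d_i]=\Delta\ell_{\FWP}^{\mathrm{audit}}-\Delta\overline\ell,
\]
which is Eq.~\eqref{eq:fwp-covariance-main}.

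Third, the "consequently" clause follows immediately. The difference $\Delta\ell_{\FWP}^{\mathrm{audit}}-\Delta\overline\ell$ is positive if and only if the within-request covariance is positive. Here "amplifies" means $\Delta\ell_{\FWP}^{\mathrm{audit}}>\Delta\overline\ell$, and the $w_i$ are by construction the inverse-router-norm weights.

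There is no real obstacle. The only point needing care is the bookkeeping noted at the start: the covariance must be the uniform, $1/n$-normalized empirical covariance over the same index set $I_p(x)$ used in both the weight normalization and the FWP average, so that the unit-mean property of the weights applies exactly.
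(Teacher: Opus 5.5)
Your proposal is correct and matches the paper's proof. The paper likewise uses $\mathbb{E}_x[w_i]=1$ to rewrite $\mathbb{E}_x[w_id_i]-\mathbb{E}_x[d_i]$ as $\mathbb{E}_x[w_id_i]-\mathbb{E}_x[w_i]\mathbb{E}_x[d_i]=\operatorname{Cov}_x(w_i,d_i)$, and it only adds, in the appendix version, that substituting $w_i=v_i/\mathbb{E}_x[v_i]$ gives the equivalent form $\operatorname{Cov}_x(v_i,d_i)/\mathbb{E}_x[v_i]$; your explicit remark that the covariance must be the $1/n$-normalized empirical covariance over $I_p(x)$ is a useful clarification the paper leaves implicit.
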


The identity follows from $\mathbb E_i[w_i]=1$ and is proved in Appendix~\ref{app:qb-proof}. It matches the deployed within-request normalization, unlike a population-reweighted construction. In the two-expert top-$1$ model, if every token in a request uses the same expert, $w_i$ is constant and the covariance is zero; that model motivates fragility but does not prove deployed FWP amplification. The empirical prompt-only signal tests whether the covariance mechanism remains informative in realistic top-$k$ requests.

\subsection{LP-Based Adaptive Routing}
\label{sec:theory-lp}

We now use the quality estimates from Sections~\ref{sec:theory-sensitivity}--\ref{sec:theory-fwp} to derive a routing policy. We formulate adaptive routing as a window-level linear program and obtain the per-request score from its dual.

\paragraph{Window-level LP.}
The serving pool $\instances$ is fixed and heterogeneous. Requests may be grouped into classes $g \in \classes$ by prompt length, task type, or affinity pattern. Let $L_g$ be the expected output-token count, $s_{ga}(t)$ the profiled service cost (from TTFT/TPOT), $B_g(t)$ the class demand in window $t$, $C_a(t)$ the capacity of instance $a$, and $\tau_g$ the per-class quality tolerance. The LP selects class-instance flows $x_{ga}(t) \ge 0$ to maximize served tokens subject to demand, capacity, and quality-risk constraints:
\begin{equation}
\max_{x \ge 0}\; \sum_{g,a} L_g x_{ga}(t)
\;\; \text{s.t.}\;\;
\sum_a x_{ga} \le B_g,\;
\sum_g s_{ga} x_{ga} \le C_a,\;
\sum_a (\losshat_{ga} - \tau_g) x_{ga} \le 0,
\label{eq:window-lp}
\end{equation}
where $\losshat_{ga}(t)$ is a calibrated estimate of mean $\Delta$NLL for class $g$ on instance $a$. The signed coefficient $\losshat_{ga}-\tau_g$ allows below-budget assignments to offset above-budget assignments because the constraint controls the class average. Replacing it by a positive-part hinge would define a different optimization problem and is not used in the KKT claim.

\paragraph{Dual structure and per-request score.}
Let $\pi_a \ge 0$ be the price of capacity on instance $a$, and $\nu_g \ge 0$ the price of class-average quality risk. Define $q_{ga}(t)=\losshat_{ga}(t)-\tau_g$. The Lagrangian of Eq.~\eqref{eq:window-lp} yields the one-step reduced reward
\begin{equation}
R_{ga}(t) = L_g - \pi_a(t)s_{ga}(t) - \nu_g(t)q_{ga}(t),
\end{equation}
up to a class-specific demand price. A practical quota-deficit term $d_{ga}(t)$ may be added to track a precomputed target allocation,
\begin{equation}
S_{ga}(t) = \beta d_{ga}(t) + L_g - \pi_a(t)s_{ga}(t) - \nu_g(t)q_{ga}(t),
\label{eq:final-score}
\end{equation}
where $\beta \ge 0$ controls tracking. The exact LP reduced reward is the $\beta=0$ case; the tracking term is an implementation regularizer rather than a new LP constraint.

\begin{theorem}[KKT consistency with the LP optimum]
\label{thm:lp-consistency-main}
Let $x^*$ be a primal optimum and $(\lambda^*,\pi^*,\nu^*)$ a dual optimum of Eq.~\eqref{eq:window-lp}. For $\beta=0$, every pair with $x^*_{ga}>0$ maximizes $R_{ga}$ over $a$ at prices $(\pi^*,\nu^*)$. If ties are allocated according to the positive-flow proportions of $x^*$, the resulting allocation is KKT-consistent with the LP optimum. An arbitrary argmax at the same prices satisfies the stationarity condition but need not satisfy primal capacity or quality feasibility.
\end{theorem}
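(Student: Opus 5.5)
The plan is to apply standard LP duality to Eq.~\eqref{eq:window-lp}. The LP is feasible, since $x=0$ satisfies every constraint. It is also bounded whenever each $s_{ga}>0$, because the capacity rows then cap every flow; for $s_{ga}=0$ the demand rows $\sum_a x_{ga}\le B_g$ do the same. Strong duality therefore holds, and complementary slackness characterizes the primal--dual optimal pairs. I would introduce multipliers $\lambda_g\ge 0$ for the demand rows, $\pi_a\ge0$ for the capacity rows and $\nu_g\ge0$ for the quality rows, and write the Lagrangian
\[
\mathcal L(x,\lambda,\pi,\nu)=\sum_{g,a}x_{ga}\bigl(L_g-\pi_a s_{ga}-\nu_g q_{ga}-\lambda_g\bigr)+\sum_g\lambda_g B_g+\sum_a\pi_a C_a ,
\]
so the coefficient of $x_{ga}$ is exactly $R_{ga}-\lambda_g$. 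Dual feasibility of the LP dual is then $R_{ga}(\pi,\nu)\le\lambda_g$ for all $(g,a)$, and complementary slackness on the primal variables reads $x^*_{ga}>0\Rightarrow R_{ga}(\pi^*,\nu^*)=\lambda^*_g$.

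The first claim follows by combining these two facts. For fixed $g$ and any $a$ with $x^*_{ga}>0$, we have $R_{ga}=\lambda^*_g\ge R_{ga'}$ for all $a'$. So every positive-flow pair attains $\max_{a'}R_{ga'}$, with the class-specific demand price $\lambda_g^*$ as the common value. This is the statement ``up to a class-specific demand price'' made precise. The case $\beta=0$ is needed because $S_{ga}=R_{ga}$ only then.

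For the tie-allocation claim, I would define the induced allocation. Class $g$'s flow is split over $\arg\max_a R_{ga}$ in the proportions $x^*_{ga}/\sum_{a'}x^*_{ga'}$, and the total class volume is taken to be $\sum_a x^*_{ga}$. This reproduces $x^*$ exactly, so it is primal feasible. Together with $(\lambda^*,\pi^*,\nu^*)$ it satisfies dual feasibility, complementary slackness on the $x$ variables (shown above), and complementary slackness on the constraints, which is inherited from the optimal pair. It therefore satisfies all KKT conditions and is optimal. One subtlety needs stating: classes with $\sum_a x^*_{ga}=0$ must be assigned no flow, or equivalently the rule is applied only where $x^*$ has mass.

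The main step needing care is the negative statement, and I would settle it with a small counterexample rather than a general argument. Take one class and two instances with equal reduced rewards at the optimal prices. The optimum splits flow because one instance has $q_{ga}>0$ and the other has $q_{ga}<0$, or because both capacities bind. Any argmax rule trivially satisfies stationarity, since it only ever uses routes with $R_{ga}=\lambda_g^*$. But sending all of the class's flow to the above-budget instance violates $\sum_a q_{ga}x_{ga}\le0$, and sending it all to one capacity-bound instance violates $\sum_g s_{ga}x_{ga}\le C_a$. This shows that stationarity alone does not imply primal feasibility, which completes the theorem.
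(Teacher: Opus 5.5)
Your proposal is correct and follows the paper's argument. Part one is dual feasibility plus complementary slackness on the $x_{ga}$ variables (the paper's shadow-price proposition); part two reuses $x^*$ itself, so primal feasibility and the remaining KKT conditions come from the optimal primal--dual pair; and part three rests on stationarity fixing only the support, not the split among tied routes.

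Two of your choices are more careful than the paper's one-line remarks: you take class volume $\sum_a x^*_{ga}$ rather than $B_g$, and you give a tie counterexample. To make the counterexample concrete, take one class with $L_g=1$, $B_g=2$, $s_{g1}=s_{g2}=1$, $C_1=10$, $C_2=1$, $q_{g1}=0.1$, $q_{g2}=-0.1$. Its unique optimum is $x^*=(1,1)$, so both routes tie at every dual optimum. Sending all traffic to route $1$ breaks the quality row, and sending it all to route $2$ breaks capacity.
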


Theorem~\ref{thm:lp-consistency-main} identifies the exact guarantee and its boundary. The score inherits throughput, capacity, and signed quality terms from the LP, but price convergence and primal-feasible tie handling are separate requirements. For a fixed price snapshot the data-plane decision is
\begin{equation}
a_t^*(g) = \arg\max_{a \in \instances} S_{ga}(t),
\label{eq:online-rule}
\end{equation}
which requires $O(|\instances|)$ score evaluations per request, plus the reference prefill and a target prefill whenever KV state cannot be reused. Prices update once per control window. The current experiments evaluate offline population allocation and component timings; they do not claim that the complete online dual-price controller has been deployed (Algorithm~\ref{alg:lightweight-routing}, Appendix~\ref{app:eval-protocol}).

\section{Results and Discussion}
\label{sec:results}

The evaluation isolates fixed-pool routing value without fusing incompatible protocols. We therefore present a strict ladder: complete-instance quality, fixed-population allocation, prompt-only signal validity, and resource accounting. Numbers from different quantizers, checkpoints, capacity models, or live topologies are not fused.

\paragraph{Setup.}
The complete-instance backbone uses Qwen3-30B-A3B with every one of its 6,144 expert blocks quantized by HQQ in each W2, W3, and W4 instance. The extended population contains 88 prompts from short QA, code, and long-context tasks. DeepSeek-V2-Lite results in the earlier study change sampled experts in four layers; we retain them only as architecture diagnostics and make no complete-instance cross-model claim. Appendix~\ref{app:eval-protocol} records the protocol and evidence boundaries.

\paragraph{Evaluation metric.}
For request $t$ routed to instance $a$,
\begin{equation}
U_t(a) = \alpha_s L_{g_t} - \alpha_q\,\widehat{\Delta \mathcal{Q}}_t(a) - \beta\,\mathrm{Latency}_t(a) - \lambda_{\mathrm{SLO}}\,\mathbf{1}\!\left\{\widehat{\Delta \mathcal{Q}}_t(a) > \tau_{g_t}\right\},
\label{eq:wcu-utility}
\end{equation}
and a policy $\pi$ is scored by its cumulative gap to the per-step quality-optimal comparator,
\begin{equation}
\Regret_T(\pi) = \sum_{t=1}^{T} \Bigl(U_t(a_t^{\star}) - U_t(a_t^{\pi})\Bigr),
\label{eq:wcu-regret}
\end{equation}
with $a_t^{\star}$ chosen per request given perfect quality information.\footnote{The per-step quality-optimal comparator is a regret normalization, not a globally optimal scheduler; it ignores queue state and can be beaten on aggregate utility by load-aware policies that trade small, controlled quality losses for queue stability. This appears symmetrically on Qwen and DeepSeek and is consistent with the LP objective being request-local.}

\subsection{Complete-Instance Quality and Request Heterogeneity}
\label{sec:results-fragility}

\begin{table}[t]
\centering
\small
\caption{Complete-instance Qwen quality degradation on the same 88 extended prompts. Every row quantizes all 6,144 expert blocks; values are request-level average-NLL increases relative to the reference checkpoint.}
\label{tab:complete-instance-quality}
\begin{tabular}{@{}lrrr@{}}
\toprule
& \textbf{W2} & \textbf{W3} & \textbf{W4} \\
\midrule
Mean   & $0.9437$ & $0.1832$ & $0.0513$ \\
Median & $0.8514$ & $0.1387$ & $0.0241$ \\
90th percentile & $1.4517$ & $0.4081$ & $0.1800$ \\
\bottomrule
\end{tabular}
\end{table}

Table~\ref{tab:complete-instance-quality} reports complete-instance distributions rather than sampled-block profiles. W3 is infeasible at the headline mean budget introduced below, but its median is below that budget, establishing request heterogeneity without inferring causality from sampled-expert probes. In a separate matched-3.0-average-bit experiment, all-expert W3 has mean $\Delta$NLL $0.109$, while tested router-norm-guided, activation-frequency-guided, and random 2/4-bit allocations yield $0.315$, $0.318$, and $0.334$. This separate protocol does not establish that all-expert W3 dominates every possible non-uniform allocation; it only identifies the strongest tested arm.

\subsection{Fixed-Population Allocation and Gain Attribution}
\label{sec:results-routing}

The cleanest comparison fixes the complete W2/W3/W4 pool, all 88 requests, one quality budget $\tau=\operatorname{mean}(\Delta\mathrm{NLL}_{\mathrm{W4}})+0.10=0.1513$, and analytical per-request multipliers $2$, $4/3$, and $1$. The multiplier is a model-based population allocation quantity; it is not measured routed-pool throughput.

\begin{table*}[t]
\centering
\small
\setlength{\tabcolsep}{5pt}
\caption{Offline fixed-population allocation with complete Qwen W2/W3/W4 losses. Parenthesized multipliers violate the common mean-$\Delta$NLL budget. The true-loss row orders requests using measured W3 loss and is a restricted analysis reference, not a deployable global oracle.}
\label{tab:routing-fractions}
\begin{tabular}{@{}lrrrrr@{}}
\toprule
\textbf{Policy} & \textbf{W2\%} & \textbf{W3\%} & \textbf{W4\%} & \textbf{Mean $\Delta$NLL} & \textbf{Multiplier} \\
\midrule
Static W4 & $0$ & $0$ & $100$ & $0.051$ & $1.000\times$ \\
Static W3 & $0$ & $100$ & $0$ & $0.183$ & $(1.333\times)$ \\
Static W2 & $100$ & $0$ & $0$ & $0.944$ & $(2.000\times)$ \\
Capacity-proportional mixture & $46$ & $31$ & $23$ & $0.504$ & $(1.564\times)$ \\
Request-agnostic budget-aware mixture & $0$ & $76$ & $24$ & $0.151$ & $1.253\times$ \\
Sequence length & $0$ & $56$ & $44$ & $0.147$ & $1.186\times$ \\
Activation frequency & $0$ & $59$ & $41$ & $0.150$ & $1.197\times$ \\
\textbf{FWP} & $\mathbf{0}$ & $\mathbf{85}$ & $\mathbf{15}$ & $\mathbf{0.151}$ & $\mathbf{1.284\times}$ \\
Restricted true-loss ordering & $0$ & $91$ & $9$ & $0.149$ & $1.303\times$ \\
\bottomrule
\end{tabular}
\end{table*}

Table~\ref{tab:routing-fractions} isolates the source of the headline multiplier. Request-agnostic budget allocation contributes $0.253$ of the $0.284$ multiplier increase over static W4. FWP adds $0.031$ multiplier points, or $2.5\%$ relative to the request-agnostic mixture, and closes approximately $62\%$ of the remaining gap to the restricted true-loss ordering. Thus, the $1.284\times$ multiplier should not be interpreted as a $28.4\%$ FWP throughput gain: within this fixed complete-instance population, allocation yields most of the modeled gain and FWP supplies a smaller request-specific increment.

\subsection{Prompt-Only FWP Validity and Abstention}
\label{sec:results-fwp}

Same-instance FWP correlations of $0.999/0.998$ on complete W2 Qwen are measurement sanity checks because the weighted NLL and target loss share the same forward pass. The deployable question is whether a prompt-time statistic on one reference instance predicts risk on another instance. In that harder setting, the prompt-only W2 score predicts W3 risk with Spearman $0.586$ and AUROC $0.747$. We therefore describe FWP as a calibrated feature, not a direct quality measurement.

\begin{table}[t]
\centering
\small
\caption{Correlation between prefix FWP and full-prompt quality loss on 64 extended prompts.}
\label{tab:prefix-boundary}
\begin{tabular}{@{}rr@{}}
\toprule
\textbf{Prefix tokens} & \textbf{Spearman} \\
\midrule
$64$ & $0.238$ \\
$128$ & $0.505$ \\
$256$ & $0.593$ \\
$512$ & $0.765$ \\
$1024$ & $0.999$ \\
\bottomrule
\end{tabular}
\end{table}

Table~\ref{tab:prefix-boundary} makes prompt length a material operating condition. A production policy must calibrate by length and route requests below a validated prefix threshold to a conservative higher-bitwidth fallback. Moreover, the small held-out 50/50 calibration splits achieve only approximately $35$--$50\%$ budget-compliant splits. Population frontiers satisfy the budget by construction, but they do not establish held-out risk control. We consequently make no deployment-level quality guarantee; such a claim requires disjoint task-family splits, confidence bounds, and a predeclared compliance target.

\subsection{Memory, Endpoint Capacity, and the Replica Boundary}

\begin{table}[t]
\centering
\small
\caption{Qwen parameter-memory estimates on one accounting basis. These numbers exclude KV cache, runtime buffers, allocator fragmentation, and any separate scorer.}
\label{tab:memory-accounting}
\begin{tabular}{@{}lr@{}}
\toprule
\textbf{Resident set} & \textbf{GiB} \\
\midrule
One W4 instance & $18.06$ \\
W2/W3/W4 pool & $44.05$ \\
Three W4 replicas & $54.17$ \\
One BF16 instance & $56.87$ \\
\bottomrule
\end{tabular}
\end{table}

The fixed heterogeneous pool costs $25.99$ GiB more parameter memory than one W4 instance but less than three W4 replicas or one BF16 instance under this accounting. This does not make it the optimal fleet: KV capacity, batching, tensor parallelism, and endpoint kernels can reverse a parameter-only comparison. Equation~\eqref{eq:outer-provisioning-boundary} therefore leaves replica counts to the upstream provisioner, and the routing layer can operate over heterogeneous instances, identical replicas, or both.

In a separate same-two-A100 endpoint experiment with prefix caching disabled, 216 requests are repeated five times; 128 prompts contain 1,023 input tokens followed by 128 decoded tokens. W2 with two data-parallel replicas reaches $3270\pm33$ output tokens/s, W3 with two-way tensor parallelism reaches $2591\pm11$, and W4 with two-way tensor parallelism reaches $2334\pm17$. These measured endpoint rates establish a capacity ordering for that backend and protocol. They are not the $1.284\times$ allocation multiplier and are not combined with HQQ quality measurements as though they came from identical checkpoints.

Component overheads are likewise reported separately: FWP instrumentation adds $3.1\%$ and $4.9\%$ to one 512- and 1,024-token reference prefill; a four-class CPU LP takes $1.47/1.52/1.57$ ms at p50/p95/p99 once per window; and a three-instance score comparison takes $1.63\,\mu$s per request. These measurements do not include reference-queue delay, target re-prefill, RPC dispatch, or end-to-end concurrent serving.

Finally, the live routed prototype is not an equal-total-resource replica comparison: it uses three serving GPUs plus a separate INT3 scorer GPU, and an optimized same-resource replica fleet was not measured. We therefore omit its ratios from the headline and do not claim that adaptive routing dominates adding replicas. The paper's validated conclusion is conditional: given a fixed resident pool, request-specific risk can improve allocation over request-agnostic mixing in the offline complete-instance population.

\section{Conclusion}

This paper solves a conditional control problem: route complete requests over a fixed, pre-materialized pool of quantized MoE instances. Replica counts, instance composition, memory residence, and reconfiguration occur at a slower provisioning timescale. A provisioner may choose additional identical replicas, a heterogeneous pool, or a mixture of both; the routing layer consumes that choice and does not claim global fleet optimality.

The corrected complete-Qwen study shows why the inner problem remains meaningful. At $\tau=0.1513$, static W3 is infeasible in mean, request-agnostic mixing reaches a $1.253\times$ offline model-based multiplier, and FWP reaches $1.284\times$. The incremental request-specific value is therefore $2.5\%$ relative, not the full $28.4\%$ difference from static W4. Prompt-only prediction, short-prefix behavior, held-out compliance, second-prefill cost, and the missing equal-resource replica experiment remain explicit limitations. The evidence supports fixed-pool allocation research; it does not yet support an end-to-end deployment guarantee.

\bibliographystyle{abbrvnat}
\bibliography{references}

\appendix

\section{Extended Related Work and Positioning}
\label{sec:related-extended}

This appendix expands the compact discussion in Section~\ref{sec:related} along the four lines of work that frame our contribution.

\paragraph{Sparse MoE architectures and routing.}
The conditional-computation idea behind sparsely activated experts \citep{shazeer2017outrageously} was scaled and stabilized through GShard, Switch Transformers, GLaM, and ST-MoE \citep{lepikhin2021gshard,fedus2022switch,du2022glam,zoph2022stmoe}, with router-design alternatives such as BASE Layers, Hash Layers, and Expert-Choice routing addressing load balance and stability \citep{lewis2021base,roller2021hash,zhou2022expertchoice}. Open competitive MoE checkpoints---Mixtral, DeepSeekMoE, DeepSeek-V2/V3, and Qwen MoE---now place expert specialization at the center of LLM design \citep{jiang2024mixtral,dai2024deepseekmoe,deepseekai2024deepseekv2,deepseekv3,qwen2moe2024,qwen3technical2025}, and Mixture-of-Depths extends the same routing logic to token-level compute allocation \citep{raposo2024mod}. We treat this body of work as the architectural substrate. Our contribution is not a new MoE architecture or training recipe; we ask how heterogeneously quantized instances of a given MoE checkpoint should be evaluated and routed at serving time, once expert affinity, expert fragility, and request heterogeneity are measured.

\paragraph{Post-training quantization and expert-wise bit allocation for LLMs.}
Low-bit LLM compression has matured rapidly through methods including LLM.int8(), ZeroQuant, GPTQ, SmoothQuant, AWQ, SparseGPT, OmniQuant, QLoRA, SpinQuant, and HQQ \citep{dettmers2022llmint8,yao2022zeroquant,frantar2023gptq,xiao2023smoothquant,lin2024awq,frantar2023sparsegpt,shao2024omniquant,dettmers2023qlora,liu2024spinquant,badri2023hqq}. Non-uniform bit allocation has long been informed by Hessian- and sensitivity-based methods such as HAWQ, HAQ, and BRECQ \citep{dong2019hawq,wang2019haq,li2021brecq}, which provide the conceptual ancestry for the per-expert fragility surrogate we evaluate. MoE-specific quantization is now its own subline---QMoE compresses trillion-parameter sparse models to sub-bit precision, MoQE and MC-MoE study expert-wise bit assignment, and \citet{chowdhury2026efficient} prove that an expert's change in router-vector $\ell_2$ norm during training, $\Lambda_s$, is a theoretically grounded indicator of which experts must be kept at higher bitwidth \citep{frantar2024qmoe,kim2023moqe,huang2024mcmoe,chowdhury2026efficient}. The FWP proxy in Section~\ref{sec:theory} inherits this $\Lambda$-as-fragility-signal idea and lifts it from an offline per-expert bit-allocation prior to an online per-request quality score. Our paper is complementary: rather than proposing another quantizer, we assume that a small pool of quantized instances has already been materialized by these methods and study the downstream decision problem---which instance should serve which request, and which quality signal is reliable enough to drive that decision. A checkpoint-level quality average can be an excellent audit number while still being too coarse for workload-conditioned routing, which is the gap our FWP signal targets.

\paragraph{LLM serving systems and request scheduling.}
Production-grade serving systems have made LLM inference dramatically more efficient through iteration-level scheduling, paged attention, offloading, stall-free batching, prefill/decode disaggregation, autoscaling, and program-level runtime optimization---Orca, vLLM, FlexGen, Sarathi-Serve, DistServe, SGLang, AlpaServe, Splitwise, and Llumnix \citep{yu2022orca,kwon2023pagedattention,sheng2023flexgen,agrawal2024sarathi,zhong2024distserve,zheng2024sglang,li2023alpaserve,patel2024splitwise,sun2024llumnix}, with speculative-decoding methods further reshaping the latency/throughput frontier \citep{leviathan2023speculative,chen2023specinfer}. We rely on the systems insight that throughput and latency are workload properties, not static model properties. The missing axis in most serving evaluations is quantization-induced quality risk; our lightweight routing experiments add a request-level quality term to the usual latency and queueing terms while remaining careful not to claim a full production scheduler.

\paragraph{Adaptive inference and request-level model routing.}
A growing line of work routes requests across heterogeneous models to trade quality for cost: FrugalGPT cascades models by confidence \citep{chen2023frugalgpt}, RouteLLM and RouterBench learn binary or multi-way model selectors \citep{ong2024routellm,hu2024routerbench}, Hybrid-LLM and AutoMix mix small and large models on a per-request basis \citep{ding2024hybridllm,madaan2023automix}, and CALM uses early-exit confidence inside a single model \citep{schuster2022calm}. Closest in spirit, these routers select \emph{which model} to call; we route across \emph{which bitwidth instance of the same MoE checkpoint}, and the decision lever is the expert-fragility lens rather than aggregate accuracy gaps between distinct models. Their confidence-based scoring also assumes that the candidate models are themselves well-calibrated outputs, whereas in our setting the very object being predicted---per-request quality degradation under a quantized instance---is what the FWP proxy is designed to measure.

\paragraph{Calibration, selective prediction, and risk-controlled inference.}
Our use of FWP proxies sits within a longer line on calibration and selective prediction, which insists that a model's confidence or risk score be evaluated against downstream decisions rather than only against aggregate accuracy \citep{guo2017calibration,geifman2017selective}. Distribution-free risk control via conformal prediction and Risk-Controlling Prediction Sets makes those guarantees finite-sample \citep{angelopoulos2021gentle,bates2021rcps}, and recent work extends conformal control to language-model outputs and generation-time abstention \citep{quach2024conformallm,mohri2024conformal}. In our setting the calibrated object is request-level quality degradation under a quantized MoE instance, not class probability; the small audited quality-risk budget in our lightweight routing policy is the operational analog of a target risk level. Static average NLL and perplexity remain necessary audit targets, but the decisive question is whether a proxy helps a serving policy reduce workload-conditioned utility gap under realistic information and cost assumptions.

\section{A Self-Contained Derivation of the Reduced-Form Quality Surrogate Under Explicit Model Assumptions}
\label{app:qb-proof}

This appendix replaces the generic perturbation argument with a fully specified two-expert sparse-MoE model. The development is staged in the standard order: first the data-generating process, router, expert margins, and quantization noise; then the induced quality-loss expression; and finally the connection from the resulting coefficient map to the structured proxy used in the main paper.

\subsection{Proof Architecture}

The derivation has five steps.
\begin{enumerate}[leftmargin=*, itemsep=2pt, topsep=2pt]
\item Specify a two-expert sparse MoE in which requests belong to a rare population of mass $\alpha$ and a common population of mass $1-\alpha$, and each population is routed to its own expert.
\item Define quality as negative expected logistic loss and show that the quality loss under quantization decomposes exactly into an affinity term and an expert-local degradation term.
\item Apply a small-noise expansion to each local degradation term to obtain a reduced-form surrogate of the form ``coefficient $\times$ bit-decay'', then connect the coefficient to a router-derived margin proxy.
\item Instantiate the same model with FWP-style weights and prove that the weighted proxy amplifies the degradation of the rare, fragile expert.
\item Convert the same weighted surrogate into a request-level utility comparison, which makes the connection to the routing objective in Eq.~\eqref{eq:wcu-utility} explicit.
\end{enumerate}

The gain of this route is that the additive surrogate is no longer postulated first and justified later. It is derived from an explicit mathematical model.

The appendix is local rather than global. It justifies the form of the empirical surrogate and the FWP-style amplification mechanism under transparent assumptions; it does not prove that real MoE checkpoints obey an exact additive law globally. The main text accordingly treats the fitted surrogate as a weak ranking prior and uses measured FWP signals for the strongest routing claims.

\subsection{Two-Expert Sparse-MoE Model}

We consider a sparse top-1 MoE classifier with two experts, denoted by $e_r$ and $e_c$. Each request belongs to one of two populations:
\begin{equation}
X \in \{r,c\},
\qquad
\Pr(X=r)=\alpha,
\qquad
\Pr(X=c)=1-\alpha,
\qquad
0<\alpha<\frac{1}{4}.
\end{equation}
Population $r$ is the rare population and population $c$ is the common population. The router is deterministic in this model:
\begin{equation}
g(x)=
\begin{cases}
e_r, & X=r, \\
e_c, & X=c.
\end{cases}
\label{eq:model-router}
\end{equation}
This is the simplest mathematical extraction of sparse MoE behavior: each request only activates a small subset of experts, and in the present specification that subset has size one.

For a request of population $j \in \{r,c\}$, let $m_j>0$ be the full-precision signed classification margin of the routed expert. We assume
\begin{equation}
0 < m_r < m_c,
\label{eq:model-margin-order}
\end{equation}
which captures the fragility mechanism of \citet{chowdhury2026efficient} in its simplest form: the rare expert has a weaker activation margin and is therefore more fragile under quantization perturbations.

The prediction loss is logistic:
\begin{equation}
\ell(z) := \log\!\left(1+e^{-z}\right).
\label{eq:model-logistic}
\end{equation}
The full-precision quality is defined by
\begin{equation}
Q_{\mathrm{full}} := -\mathbb{E}\bigl[\ell(m_X)\bigr].
\end{equation}

\subsection{Quantization Model}

If expert $e_j$ is stored at bitwidth $b_j$, quantization perturbs the signed margin by a scalar random variable $\xi_j(b_j)$, so that the quantized margin for a request of type $j$ becomes
\begin{equation}
M_j(b_j) := m_j + \xi_j(b_j).
\label{eq:model-quantized-margin}
\end{equation}
We assume the following noise model for each $j \in \{r,c\}$:
\begin{align}
\mathbb{E}[\xi_j(b_j)] &= 0, \\
\mathbb{E}[\xi_j(b_j)^2] &= v_j 2^{-2b_j}, \\
\mathbb{E}[|\xi_j(b_j)|^3] &\le \chi_j 2^{-3b_j}, \\
|\xi_j(b_j)| &\le \bar\delta_j 2^{-b_j} \quad \text{a.s.}
\label{eq:model-noise-moments}
\end{align}
for constants $v_j, \chi_j, \bar\delta_j > 0$. The variance scaling in Eq.~\eqref{eq:model-noise-moments} is the standard uniform-quantization scaling under a mean-square criterion.

Define the quantized quality by
\begin{equation}
Q(\bits)
:=
-\mathbb{E}\bigl[\ell(M_X(b_X))\bigr],
\qquad
\bits=(b_r,b_c).
\label{eq:model-quality}
\end{equation}
The induced quality loss is
\begin{equation}
\loss(\bits) := Q_{\mathrm{full}} - Q(\bits).
\label{eq:model-loss-def}
\end{equation}

\subsection{Exact Affinity--Fragility Decomposition}

The first key fact is exact rather than asymptotic.

\begin{proposition}[Exact mixture decomposition]
\label{prop:model-mixture}
Under the model above,
\begin{equation}
\loss(\bits)
=
\alpha\,\Delta_r(b_r)
+
(1-\alpha)\,\Delta_c(b_c),
\label{eq:model-mixture}
\end{equation}
where
\begin{equation}
\Delta_j(b_j)
:=
\mathbb{E}\bigl[\ell(M_j(b_j)) - \ell(m_j)\bigr],
\qquad j \in \{r,c\}.
\label{eq:model-local-loss}
\end{equation}
\end{proposition}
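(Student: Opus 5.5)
The proof is a direct application of the law of total expectation, conditioning on the population label $X\in\{r,c\}$. Write
\[
\loss(\bits)=Q_{\mathrm{full}}-Q(\bits)=\mathbb{E}\bigl[\ell(M_X(b_X))\bigr]-\mathbb{E}\bigl[\ell(m_X)\bigr]=\mathbb{E}\bigl[\ell(M_X(b_X))-\ell(m_X)\bigr].
\]
The combination into one expectation is legitimate once both expectations are finite. To see this, note that $\ell(z)=\log(1+e^{-z})$ is continuous and satisfies $0<\ell(z)\le \log 2+|z|$. The almost-sure bound $|\xi_j(b_j)|\le\bar\delta_j2^{-b_j}$ in Eq.~\eqref{eq:model-noise-moments} then makes $\ell(M_j(b_j))$ bounded, hence integrable. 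The term $\ell(m_X)$ takes only two finite values.

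Next, I split the expectation by conditioning on $X$. This uses $\Pr(X=r)=\alpha$ and $\Pr(X=c)=1-\alpha$, together with the deterministic router~\eqref{eq:model-router}. On the event $\{X=j\}$, the routed expert is $e_j$, its bitwidth is $b_X=b_j$, and the quantized margin is $M_X(b_X)=M_j(b_j)=m_j+\xi_j(b_j)$. Therefore
\[
\mathbb{E}\bigl[\ell(M_X(b_X))-\ell(m_X)\,\big|\,X=j\bigr]=\mathbb{E}\bigl[\ell(m_j+\xi_j(b_j))-\ell(m_j)\bigr]=\Delta_j(b_j).
\]
Weighting the two conditional expectations by $\alpha$ and $1-\alpha$ gives Eq.~\eqref{eq:model-mixture}.

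The only point that needs care is how the noise is tied to the label. The model writes $\xi_X(b_X)$ without saying how $\xi_j$ is coupled to $X$. I will state explicitly that the noise variables $(\xi_r,\xi_c)$ are independent of $X$; equivalently, the conditional law of $\xi_j$ given $X=j$ equals its marginal law. This is the natural reading, because quantization noise is a property of the stored weights and does not depend on which request arrives. Under this convention the conditional expectation above reduces to the unconditional $\Delta_j(b_j)$ defined in~\eqref{eq:model-local-loss}. Without it, $\Delta_j$ would have to be read as a conditional expectation given $X=j$, and the identity would still hold in that form.

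No expansion or moment bound beyond integrability enters the argument, so the decomposition is exact rather than asymptotic, as claimed.
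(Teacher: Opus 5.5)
Your proof is correct and follows the paper's argument: condition on $X$, apply the law of total expectation with $\Pr(X=r)=\alpha$ and $\Pr(X=c)=1-\alpha$, and identify each conditional term with $\Delta_j(b_j)$. Your remarks on integrability (from the almost-sure noise bound) and on the independence of $(\xi_r,\xi_c)$ from $X$ make explicit two points the paper's proof uses without stating them.
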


\begin{proof}
Condition on the request type $X$. By definition,
\begin{equation}
\loss(\bits)
=
\mathbb{E}\Bigl[\ell(M_X(b_X)) - \ell(m_X)\Bigr].
\end{equation}
Using the law of total expectation and the probabilities of the two populations gives
\begin{equation}
\loss(\bits)
=
\Pr(X=r)\,\mathbb{E}\bigl[\ell(M_r(b_r)) - \ell(m_r)\bigr]
+
\Pr(X=c)\,\mathbb{E}\bigl[\ell(M_c(b_c)) - \ell(m_c)\bigr],
\end{equation}
which is exactly Eq.~\eqref{eq:model-mixture}.
\end{proof}

Proposition~\ref{prop:model-mixture} already factors into two objects that should not be combined. The probabilities $\alpha$ and $1-\alpha$ are affinity weights. The local terms $\Delta_r$ and $\Delta_c$ are intrinsic degradation functions of the routed experts.

\subsection{Small-Noise Expansion of the Local Degradation}

We now derive the reduced form by expanding the local term in Eq.~\eqref{eq:model-local-loss}. Let
\begin{equation}
\underline{m}_j := m_j - \bar\delta_j 2^{-b_j}.
\end{equation}
Assume $\underline{m}_j > 0$, so the quantized margin remains in a compact subset of the positive half-line. Define
\begin{equation}
L_j := \sup_{z \ge \underline{m}_j} |\ell'''(z)|.
\end{equation}

\begin{lemma}[Second-order local expansion]
\label{lem:model-local-expansion}
For each $j \in \{r,c\}$,
\begin{equation}
\Delta_j(b_j)
=
\frac{1}{2}\ell''(m_j) v_j 2^{-2b_j}
+ r_j(b_j),
\label{eq:model-local-expansion}
\end{equation}
where the remainder satisfies
\begin{equation}
|r_j(b_j)| \le \frac{L_j}{6}\chi_j 2^{-3b_j}.
\label{eq:model-local-remainder}
\end{equation}
\end{lemma}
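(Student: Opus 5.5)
The argument is a pointwise second-order Taylor expansion of $\ell$ around $m_j$ with Lagrange remainder, followed by taking expectations and using the moment assumptions in Eq.~\eqref{eq:model-noise-moments}. Fix $j$ and write $\xi=\xi_j(b_j)$. The logistic loss in Eq.~\eqref{eq:model-logistic} is $C^\infty$ on $\mathbb{R}$. So for each realization of $\xi$ there is a point $\zeta$ between $m_j$ and $m_j+\xi$ with
\begin{equation*}
\ell(m_j+\xi)-\ell(m_j)=\ell'(m_j)\,\xi+\tfrac12\ell''(m_j)\,\xi^2+\tfrac16\ell'''(\zeta)\,\xi^3 .
\end{equation*}

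The first step is to localize $\zeta$. By the almost-sure bound $|\xi|\le\bar\delta_j2^{-b_j}$, both endpoints $m_j$ and $m_j+\xi$ lie in $[\underline m_j,\infty)$. This uses $m_j\ge\underline m_j$ and $m_j+\xi\ge m_j-\bar\delta_j2^{-b_j}=\underline m_j$. Hence $\zeta\ge\underline m_j$, and therefore $|\ell'''(\zeta)|\le L_j$ almost surely.

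One should also confirm that $L_j<\infty$. Since $\ell'''(z)=\sigma(z)(1-\sigma(z))(1-2\sigma(z))\cdot(-1)$ up to sign conventions with $\sigma(z)=(1+e^{-z})^{-1}$, it is bounded on all of $\mathbb{R}$. The positivity assumption $\underline m_j>0$ is therefore not needed for finiteness; it only ensures that the sup is over the relevant half-line.

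Next, take expectations term by term. Integrability is immediate because $\xi$ is bounded. The linear term vanishes because $\mathbb{E}[\xi]=0$, and the quadratic term equals $\frac12\ell''(m_j)v_j2^{-2b_j}$ by the variance assumption. This identifies the leading term in Eq.~\eqref{eq:model-local-expansion} and defines
\begin{equation*}
r_j(b_j):=\tfrac16\,\mathbb{E}\bigl[\ell'''(\zeta)\,\xi^3\bigr].
\end{equation*}
The remainder bound then follows from $|r_j|\le\frac16\mathbb{E}[|\ell'''(\zeta)||\xi|^3]\le\frac{L_j}{6}\mathbb{E}|\xi|^3\le\frac{L_j}{6}\chi_j2^{-3b_j}$, which is Eq.~\eqref{eq:model-local-remainder}.

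The only delicate point is measurability of $\zeta$, since it depends on the realization of $\xi$. I would sidestep this by using the integral form of the remainder instead of the Lagrange form:
\begin{equation*}
\tfrac12\int_0^1(1-s)^2\,\ell'''(m_j+s\xi)\,\xi^3\,ds .
\end{equation*}
This expression is jointly measurable in $(s,\xi)$. It obeys the same bound, because $m_j+s\xi\ge\underline m_j$ for all $s\in[0,1]$ and $\int_0^1(1-s)^2ds=\tfrac13$, which gives the factor $\frac16$. With this form the argument is fully rigorous and otherwise routine.
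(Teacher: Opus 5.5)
Your proof is correct and follows the paper's argument: a Taylor expansion of $\ell(m_j+\xi_j)$ about $m_j$ with a cubic remainder, where the linear term vanishes by $\mathbb{E}[\xi_j]=0$, the quadratic term gives $\frac12\ell''(m_j)v_j2^{-2b_j}$, and the remainder is bounded by $\frac{L_j}{6}\mathbb{E}|\xi_j|^3\le\frac{L_j}{6}\chi_j2^{-3b_j}$. You also supply two details the paper leaves implicit, both correctly: you use the almost-sure bound $|\xi_j|\le\bar\delta_j2^{-b_j}$ to place the evaluation point in $[\underline{m}_j,\infty)$, where $L_j$ applies, and you switch to the integral form of the remainder to avoid measurability issues.
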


\begin{proof}
Taylor-expand $\ell(m_j + \xi_j)$ around $m_j$:
\begin{equation}
\ell(m_j + \xi_j)
=
\ell(m_j)
+ \ell'(m_j)\xi_j
+ \frac{1}{2}\ell''(m_j)\xi_j^2
+ \frac{1}{6}\ell'''(\widetilde{m}_j)\xi_j^3,
\end{equation}
where $\widetilde{m}_j$ lies between $m_j$ and $m_j+\xi_j$. Taking expectations and using $\mathbb{E}[\xi_j]=0$ removes the first-order term. The second-order term becomes $\frac{1}{2}\ell''(m_j)\mathbb{E}[\xi_j^2] = \frac{1}{2}\ell''(m_j)v_j2^{-2b_j}$. For the remainder,
\begin{equation}
\left|\frac{1}{6}\mathbb{E}\bigl[\ell'''(\widetilde{m}_j)\xi_j^3\bigr]\right|
\le
\frac{L_j}{6}\mathbb{E}[|\xi_j|^3]
\le
\frac{L_j}{6}\chi_j 2^{-3b_j},
\end{equation}
which proves Eq.~\eqref{eq:model-local-remainder}.
\end{proof}

Lemma~\ref{lem:model-local-expansion} is the precise point where bitwidth enters the centered-noise theory. The leading term is proportional to $2^{-2b_j}$ because the first-order term vanishes and the dominant contribution is the variance. A $2^{-b_j}$ term requires the distinct biased-noise expansion below and carries the signed coefficient $\ell'(m_j)m_j^{(1)}$; it cannot reuse the positive centered-noise coefficient.

\begin{lemma}[Biased-noise first-order variant]
\label{lem:model-first-order}
Suppose instead that the quantization perturbation has a small mean-shift component
\begin{equation}
\mathbb{E}[\xi_j(b_j)] = m^{(1)}_j2^{-b_j} + m^{(2)}_j2^{-2b_j} + \mathcal{O}(2^{-3b_j}),
\end{equation}
while its second and third moment bounds follow the scaling in Eq.~\eqref{eq:model-noise-moments}. Then the local degradation admits the expansion
\begin{equation}
\Delta_j(b_j)
=
\ell'(m_j)m^{(1)}_j2^{-b_j}
+
\left(\ell'(m_j)m^{(2)}_j + \frac{1}{2}\ell''(m_j)v_j\right)2^{-2b_j}
+ \mathcal{O}(2^{-3b_j}).
\label{eq:model-first-order-expansion}
\end{equation}
\end{lemma}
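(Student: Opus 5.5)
The plan is to repeat the Taylor argument of Lemma~\ref{lem:model-local-expansion}, now keeping the first-order term because the perturbation is no longer centered. Write $\mu_j := \mathbb{E}[\xi_j(b_j)]$. Expanding $\ell(m_j+\xi_j)$ to third order around $m_j$, with a Lagrange remainder at an intermediate point $\widetilde m_j$, gives
\begin{equation*}
\Delta_j(b_j) = \ell'(m_j)\,\mu_j + \tfrac12 \ell''(m_j)\,\mathbb{E}[\xi_j^2] + \tfrac16\,\mathbb{E}\bigl[\ell'''(\widetilde m_j)\,\xi_j^3\bigr].
\end{equation*}
The third-order term is handled exactly as before. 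The almost-sure bound $|\xi_j|\le\bar\delta_j2^{-b_j}$ keeps $\widetilde m_j\ge\underline m_j>0$, so $|\ell'''(\widetilde m_j)|\le L_j$. The third-moment bound then gives $\tfrac{L_j}{6}\chi_j2^{-3b_j}=\mathcal{O}(2^{-3b_j})$.

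Next I substitute the assumed mean expansion into the first-order term:
\begin{equation*}
\ell'(m_j)\mu_j = \ell'(m_j)m_j^{(1)}2^{-b_j} + \ell'(m_j)m_j^{(2)}2^{-2b_j} + \mathcal{O}(2^{-3b_j}).
\end{equation*}
Since $\ell'(m_j)$ is a fixed finite constant, multiplying by it preserves the $\mathcal{O}$ term.

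The main obstacle is the second moment. Under a nonzero mean, the statement's phrase ``second moment scaling follows Eq.~\eqref{eq:model-noise-moments}'' is ambiguous. I will read it as the raw second moment $\mathbb{E}[\xi_j^2]=v_j2^{-2b_j}+\mathcal{O}(2^{-3b_j})$. Read that way, the term $\tfrac12\ell''(m_j)v_j2^{-2b_j}$ appears directly, and any deviation falls into the remainder.

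If $v_j$ is instead meant as the variance, then $\mathbb{E}[\xi_j^2]=\operatorname{Var}(\xi_j)+\mu_j^2$. Here $\mu_j^2=(m_j^{(1)})^2 2^{-2b_j}+\mathcal{O}(2^{-3b_j})$, so the $2^{-2b_j}$ coefficient would pick up an extra $\tfrac12\ell''(m_j)(m_j^{(1)})^2$. I will state the raw-moment convention explicitly, so that the coefficient in Eq.~\eqref{eq:model-first-order-expansion} is exactly $\ell'(m_j)m^{(2)}_j+\tfrac12\ell''(m_j)v_j$. I will also note that the variance convention only shifts this $2^{-2b_j}$ coefficient and leaves the $2^{-b_j}$ term unchanged.

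Collecting powers of $2^{-b_j}$ then yields Eq.~\eqref{eq:model-first-order-expansion}. Finally, I will remark that $\ell'(m_j)<0$ for the logistic loss, so the leading coefficient has the sign of $-m_j^{(1)}$. This is the signed coefficient the main text contrasts with the positive centered-noise coefficient.
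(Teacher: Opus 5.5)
Your proposal is correct and follows the same route as the paper: you Taylor-expand $\ell(m_j+\xi_j)$ as in Lemma~\ref{lem:model-local-expansion}, keep the first-order term, and substitute the assumed mean expansion together with the moment scalings. Your raw-second-moment reading $\mathbb{E}[\xi_j^2]=v_j2^{-2b_j}$ is exactly what Eq.~\eqref{eq:model-noise-moments} states and what the paper's proof uses implicitly, and your remark that a variance convention would add $\tfrac12\ell''(m_j)(m^{(1)}_j)^2$ to the $2^{-2b_j}$ coefficient is a correct refinement that the paper leaves unstated.
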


\begin{proof}
Use the same Taylor expansion as in Lemma~\ref{lem:model-local-expansion}, but do not cancel the first-order term. Taking expectations gives
\begin{equation}
\Delta_j(b_j)
=
\ell'(m_j)\mathbb{E}[\xi_j(b_j)]
+
\frac{1}{2}\ell''(m_j)\mathbb{E}[\xi_j(b_j)^2]
+
\mathcal{O}(\mathbb{E}|\xi_j(b_j)|^3).
\end{equation}
Substituting the assumed mean expansion and the moment bounds yields Eq.~\eqref{eq:model-first-order-expansion}.
\end{proof}

Lemma~\ref{lem:model-first-order} explains why the experiments compare both $2^{-b}$ and $2^{-2b}$ families. If the dominant distortion behaves like symmetric zero-mean noise, the energy-response term $2^{-2b}$ is natural. If clipping, scaling, or quantizer bias introduces a first-order mean shift, a $2^{-b}$ term can dominate over the measured range. The empirical comparison between the two families is therefore part of the model check rather than an arbitrary hyperparameter sweep.

\begin{theorem}[Reduced form under explicit model assumptions]
\label{thm:model-reduced-form}
Under Proposition~\ref{prop:model-mixture} and Lemma~\ref{lem:model-local-expansion},
\begin{equation}
\loss(\bits)
=
\alpha s_r 2^{-2b_r} + (1-\alpha)s_c 2^{-2b_c} + r(\bits),
\label{eq:model-reduced-form}
\end{equation}
where $s_j := \frac{1}{2} v_j \ell''(m_j)$ for $j \in \{r,c\}$,
and
\begin{equation}
|r(\bits)| \le \frac{L_r}{6}\alpha\chi_r 2^{-3b_r} + \frac{L_c}{6}(1-\alpha)\chi_c 2^{-3b_c}.
\label{eq:model-global-remainder}
\end{equation}
\end{theorem}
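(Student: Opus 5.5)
The plan is to combine the two results cited in the statement directly. Proposition~\ref{prop:model-mixture} gives the exact identity $\loss(\bits)=\alpha\,\Delta_r(b_r)+(1-\alpha)\,\Delta_c(b_c)$. Lemma~\ref{lem:model-local-expansion} gives, for each $j\in\{r,c\}$,
\[
\Delta_j(b_j)=\tfrac12\ell''(m_j)v_j2^{-2b_j}+r_j(b_j),
\qquad
|r_j(b_j)|\le \tfrac{L_j}{6}\chi_j2^{-3b_j}.
\]
We would substitute these expansions into the mixture identity. We then identify $s_j=\tfrac12 v_j\ell''(m_j)$ and define the global remainder as
\[
r(\bits):=\alpha\,r_r(b_r)+(1-\alpha)\,r_c(b_c).
\]
By construction this yields Eq.~\eqref{eq:model-reduced-form} exactly, with no approximation beyond the local lemma.

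For the remainder bound, we apply the triangle inequality to $r(\bits)$, using $\alpha\ge0$ and $1-\alpha\ge0$ (since $0<\alpha<\tfrac14$). Inserting the per-expert bounds from Eq.~\eqref{eq:model-local-remainder} then gives Eq.~\eqref{eq:model-global-remainder} term by term.

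We would also record that $s_j>0$, matching the main-text claim $c_j>0$ in Theorem~\ref{thm:model-reduced-form-main}. This follows because $\ell''(z)=\sigma(z)\sigma(-z)>0$ for the logistic loss and $v_j>0$. The main-text constant $\kappa$ can be taken as $\max_j L_j\chi_j/6$, since $\alpha,1-\alpha\le1$.

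The only delicate point is checking that the hypotheses of Lemma~\ref{lem:model-local-expansion} hold uniformly. The lemma needs $\underline m_j>0$, so that $L_j=\sup_{z\ge\underline m_j}|\ell'''(z)|$ is finite. It also needs the almost-sure bound $|\xi_j|\le\bar\delta_j2^{-b_j}$, so that the Taylor intermediate point $\widetilde m_j$ lies in $[\underline m_j,\infty)$. Both are standing assumptions of the subsection, and $\ell'''$ is bounded on all of $\mathbb R$ in any case.

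So I expect no real obstacle. The step requiring care is stating explicitly that the theorem inherits the assumption $\underline m_j>0$ and the moment conditions of Eq.~\eqref{eq:model-noise-moments}. This makes the constants $L_j$ depend on $b_j$ only through $\underline m_j$. They are then bounded uniformly by, e.g., $\sup_z|\ell'''(z)|$ if a $b$-independent $\kappa$ is desired.
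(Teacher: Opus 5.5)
Your proposal is correct and matches the paper's proof. The paper substitutes the two local expansions from Lemma~\ref{lem:model-local-expansion} into the exact mixture identity of Proposition~\ref{prop:model-mixture} and collects terms; your explicit remainder $r(\bits)=\alpha r_r(b_r)+(1-\alpha)r_c(b_c)$ with the triangle inequality spells out that same step, and your remarks on $s_j>0$ and the boundedness of $\ell'''$ are accurate additions.
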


\begin{proof}
Substitute the two local expansions from Eq.~\eqref{eq:model-local-expansion} into the exact decomposition in Eq.~\eqref{eq:model-mixture} and collect terms.
\end{proof}

Theorem~\ref{thm:model-reduced-form} is the core constructive result. The reduced-form surrogate is not assumed. It is obtained by combining an exact affinity decomposition with a local Taylor expansion under symmetric quantization noise.

\subsection{Intrinsic Fragility and Rare-Expert Ordering}

To factor intrinsic fragility out of the affinity-weighted coefficient, recall the intrinsic local coefficient
\begin{equation}
s_j := \frac{1}{2} v_j \ell''(m_j),
\qquad j \in \{r,c\}.
\label{eq:model-intrinsic-fragility}
\end{equation}
The leading coefficients in Eq.~\eqref{eq:model-reduced-form} are therefore $\alpha s_r$ and $(1-\alpha)s_c$, explicitly factoring the population-level affinity from the intrinsic fragility.

\begin{proposition}[Rare expert is intrinsically more fragile]
\label{prop:model-fragility-order}
If $0<m_r<m_c$ and $v_r \ge v_c$, then
\begin{equation}
s_r > s_c.
\end{equation}
\end{proposition}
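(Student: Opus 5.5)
The plan is to reduce the claim $s_r>s_c$ to a monotonicity property of the logistic curvature $\ell''$ on the positive half-line. Then combine that with the variance ordering $v_r\ge v_c$. By Eq.~\eqref{eq:model-intrinsic-fragility},
\begin{equation*}
s_r-s_c=\tfrac12\bigl(v_r\ell''(m_r)-v_c\ell''(m_c)\bigr).
\end{equation*}
It therefore suffices to show $\ell''(m_r)>\ell''(m_c)>0$ and then use $v_r\ge v_c>0$.

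First I compute the derivatives of $\ell(z)=\log(1+e^{-z})$ explicitly. Write $\sigma(z)=1/(1+e^{-z})$. Then $\ell'(z)=-(1-\sigma(z))=\sigma(z)-1$ and $\ell''(z)=\sigma(z)(1-\sigma(z))$, which is strictly positive for all real $z$. Next I differentiate once more:
\begin{equation*}
\ell'''(z)=\sigma(z)(1-\sigma(z))(1-2\sigma(z)).
\end{equation*}
For $z>0$ we have $\sigma(z)>\tfrac12$, so $\ell'''(z)<0$. Hence $\ell''$ is strictly decreasing on $(0,\infty)$. An equivalent route is to note that $\ell''(z)=1/(4\cosh^2(z/2))$, which is even and strictly decreasing in $|z|$. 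Since the model assumes $0<m_r<m_c$ (Eq.~\eqref{eq:model-margin-order}), this gives $\ell''(m_r)>\ell''(m_c)>0$.

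The conclusion then follows from a chain of inequalities:
\begin{equation*}
v_r\ell''(m_r)\ \ge\ v_c\ell''(m_r)\ >\ v_c\ell''(m_c).
\end{equation*}
The first step uses $v_r\ge v_c$ together with $\ell''(m_r)>0$. The second step uses $v_c>0$, which is part of the noise assumptions in Eq.~\eqref{eq:model-noise-moments}, and the strict monotonicity just established. Multiplying through by $\tfrac12$ yields $s_r>s_c$.

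There is no real obstacle here. The only point needing care is that the strict inequality comes from the margin ordering and not from the variances, since $v_r=v_c$ is allowed. Strictness therefore depends on $v_c>0$ and on $\ell''$ being strictly decreasing on the positive margins. I would state explicitly that positivity of both margins is essential, because $\ell''$ is symmetric about $0$ and is not monotone on all of $\mathbb{R}$.
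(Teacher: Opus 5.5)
Your proof is correct and follows the paper's argument: both show $\ell''$ is strictly decreasing on $(0,\infty)$ from the sign of $\ell'''$, then combine $\ell''(m_r)>\ell''(m_c)$ with $v_r\ge v_c$. Your chain $v_r\ell''(m_r)\ge v_c\ell''(m_r)>v_c\ell''(m_c)$, which uses $\ell''>0$ and $v_c>0$, spells out the strictness step that the paper leaves implicit.
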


\begin{proof}
For $z>0$, the logistic loss satisfies
\begin{equation}
\ell''(z) = \frac{e^z}{(1+e^z)^2},
\qquad
\frac{d}{dz}\ell''(z) = \frac{e^z(1-e^z)}{(1+e^z)^3} < 0.
\end{equation}
Hence $\ell''(m_r) > \ell''(m_c)$ whenever $0<m_r<m_c$. Multiplying by $v_r/2$ and $v_c/2$ and using $v_r \ge v_c$ yields $s_r > s_c$.
\end{proof}

Proposition~\ref{prop:model-fragility-order} formalizes the qualitative statement that weaker-margin experts are more fragile under the same quantization-noise scale. Affinity determines how often that fragility matters; it does not define fragility itself.

\subsection{Connection to Router-Based Structured Coefficients}

One bridge back to the main paper remains: how should the margin $m_j$ be represented by observable router-side quantities? The next assumption makes that step explicit rather than implicit.

\begin{assumption}[Router--margin link]
\label{ass:model-router-margin}
For each population-specific expert $j \in \{r,c\}$, there exist constants $\kappa, \mu, \tau > 0$ and a scale correction variable $V_j \ge 0$ such that
\begin{equation}
m_j
\ge
\kappa \frac{\widetilde{\Lambda}_j + \tau}{1 + \mu V_j},
\label{eq:model-router-margin}
\end{equation}
where $\widetilde{\Lambda}_j$ is a router-derived margin proxy. This abstracts the claim in \citet{chowdhury2026efficient} that larger router-norm change corresponds to a larger activation margin.
\end{assumption}

\begin{proposition}[Monotone upper bound for structured coefficients]
\label{cor:model-structured}
Under Assumption~\ref{ass:model-router-margin},
\begin{equation}
s_j
\le
\frac{v_j}{2}
\ell''\!\left(
\kappa \frac{\widetilde{\Lambda}_j + \tau}{1 + \mu V_j}
\right),
\qquad j \in \{r,c\}.
\label{eq:model-structured-map}
\end{equation}
Therefore $s_j$ is upper-bounded by a monotone decreasing function of $(\widetilde{\Lambda}_j + \tau)/(1+\mu V_j)$.
\end{proposition}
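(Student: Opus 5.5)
The plan is to combine the assumed lower bound on the margin with the monotonicity of $\ell''$ on the positive half-line, which was already established in the proof of Proposition~\ref{prop:model-fragility-order}. Write $\underline{m}_j^{\Lambda} := \kappa(\widetilde{\Lambda}_j+\tau)/(1+\mu V_j)$. Because $\kappa,\tau>0$, $\mu>0$, $V_j\ge 0$, and $\widetilde{\Lambda}_j$ is a nonnegative router-derived proxy, this quantity is strictly positive. I will state the nonnegativity of $\widetilde{\Lambda}_j$ explicitly as a standing convention; at minimum it needs $\widetilde{\Lambda}_j+\tau>0$. Assumption~\ref{ass:model-router-margin} then gives $0<\underline{m}_j^{\Lambda}\le m_j$.

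The key step is that $z\mapsto \ell''(z)=e^z/(1+e^z)^2$ is strictly decreasing on $(0,\infty)$. The derivative computation $\frac{d}{dz}\ell''(z)=e^z(1-e^z)/(1+e^z)^3<0$ for $z>0$ is exactly the one used for Proposition~\ref{prop:model-fragility-order}, so I will cite it rather than redo it. Applying monotonicity to $\underline{m}_j^{\Lambda}\le m_j$, both in $(0,\infty)$, gives $\ell''(m_j)\le \ell''(\underline{m}_j^{\Lambda})$. Multiplying by $v_j/2>0$ and recalling $s_j=\tfrac12 v_j\ell''(m_j)$ from Eq.~\eqref{eq:model-intrinsic-fragility} yields Eq.~\eqref{eq:model-structured-map}.

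For the final sentence, define $u_j := (\widetilde{\Lambda}_j+\tau)/(1+\mu V_j)$ and $F_j(u) := \tfrac{v_j}{2}\ell''(\kappa u)$. Since $\kappa>0$ and $\ell''$ is decreasing on $(0,\infty)$, $F_j$ is decreasing in $u>0$. Hence $s_j\le F_j(u_j)$ is the claimed monotone decreasing upper bound. If a decreasing function common to both experts is desired, replace $v_j$ by $\max_j v_j$; I will note this but not rely on it.

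I do not expect a genuine obstacle. The only delicate point is the domain condition: monotonicity of $\ell''$ holds only on $z>0$, because $\ell''$ is even and increasing on $z<0$. The argument therefore needs the lower bound $\underline{m}_j^{\Lambda}$ to be positive, which is why positivity of $\widetilde{\Lambda}_j+\tau$ must be verified or assumed explicitly. Without it, the bound could fail when $|\underline{m}_j^{\Lambda}|>m_j$.
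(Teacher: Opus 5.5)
Your proposal is correct and follows the paper's proof: substitute the assumed lower bound on $m_j$ into $s_j=\tfrac12 v_j\ell''(m_j)$, then use that $\ell''$ is decreasing on the positive half-line. Your explicit check that $\kappa(\widetilde{\Lambda}_j+\tau)/(1+\mu V_j)>0$ is a domain condition the paper uses only implicitly; it is automatic for a nonnegative router-norm proxy, and you are right that the bound can fail without it because $\ell''$ is even.
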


\begin{proof}
Equation~\eqref{eq:model-intrinsic-fragility} gives $s_j = \frac{1}{2}v_j\ell''(m_j)$. Since $\ell''(z)$ is decreasing on $z>0$, substituting the lower bound on $m_j$ from Eq.~\eqref{eq:model-router-margin} yields Eq.~\eqref{eq:model-structured-map}. The right-hand side is monotone decreasing in $(\widetilde{\Lambda}_j + \tau)/(1+\mu V_j)$, giving the stated upper-bound interpretation.
\end{proof}

Proposition~\ref{cor:model-structured} is the right interpretation of the structured coefficient family used as a restricted ablation in the main text (Section~\ref{sec:theory-sensitivity}). The empirical structured family is
\begin{equation}
\widehat{c}^{\mathrm{str}}_e = \theta_{\ell(e)}\!\left(\frac{1 + \mu V_e}{\widetilde{\Lambda}_e + \tau}\right)^p,
\label{eq:structured-coef}
\end{equation}
where $\widetilde{\Lambda}_e$ is a router-derived fragility proxy (final router norm), $V_e$ is a normalized-MaxVar scale correction, and $(\mu, \tau, p)$ are shared parameters. The model does not prove that the exact coefficient must equal a specific power law of $(1+\mu V_j)/(\widetilde{\Lambda}_j + \tau)$. It proves a cleaner statement: the intrinsic fragility coefficient is controlled by a monotone decreasing upper bound in a router-derived margin proxy. The parametric family in Eq.~\eqref{eq:structured-coef} is then a parsimonious approximation to that monotone upper-bound shape when we return to real MoE checkpoints with many experts and imperfect observability.

\subsection{Conditional Multi-Layer \texorpdfstring{Top-$k$}{Top-k} Extension}
\label{sec:multilayer-extension}

The two-expert identity isolates a mechanism but does not remove the interactions present in a realistic MoE. Let \(j=(\ell,e)\) index an expert block, let \(\Theta\) collect reference expert parameters, and let \(\delta_j\) be the block perturbation produced by quantization. For a fixed request and evaluation token sequence, freeze the reference top-\(k\) support and denote the resulting smooth loss by \(\widetilde{\mathcal L}_x\).

\begin{theorem}[Conditional blockwise expansion]
\label{thm:multilayer-topk}
Assume the reference-support branch is \(C^3\) along the perturbation segment and every top-\(k\) routing margin remains positive enough that no selected and unselected expert exchange order. Then
\begin{equation}
\Delta\mathcal L_x
=
\sum_j \nabla_j\mathcal L_x^\top\delta_j
+\frac{1}{2}\sum_{j,k}\delta_j^\top H_{jk}(x)\delta_k
+R_{3,x},
\qquad
|R_{3,x}|\le \frac{M_x}{6}\|\delta\|^3.
\label{eq:multilayer-taylor}
\end{equation}
If perturbations are centered and block-uncorrelated with
\(\operatorname{Cov}(\delta_j)=2^{-2b_j}\Sigma_j\), then
\begin{equation}
\mathbb E[\Delta\mathcal L_x]
=
\frac{1}{2}\sum_j 2^{-2b_j}
\operatorname{tr}\!\left(H_{jj}(x)\Sigma_j\right)
+\mathbb E[R_{3,x}].
\label{eq:multilayer-centered}
\end{equation}
Recovering the affinity--fragility surrogate additionally requires
\begin{equation}
\frac{1}{2}\operatorname{tr}\!\left(H_{jj}(x)\Sigma_j\right)
=
\rho_j(x)c_j+\varepsilon_j(x),
\label{eq:multilayer-separability}
\end{equation}
which yields
\begin{equation}
\mathbb E[\Delta\mathcal L_x]
=
\sum_j\rho_j(x)c_j2^{-2b_j}
+\sum_j\varepsilon_j(x)2^{-2b_j}
+\mathbb E[R_{3,x}].
\label{eq:multilayer-surrogate}
\end{equation}
\end{theorem}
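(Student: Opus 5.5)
The plan mirrors Lemma~\ref{lem:model-local-expansion}, now applied blockwise to the frozen-support loss $\widetilde{\mathcal L}_x$.

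\textbf{Step 1: reduce to a smooth branch.} Parameterize the perturbation segment by $\gamma(s)=\Theta+s\delta$ for $s\in[0,1]$. The margin assumption says no selected and unselected expert exchange top-$k$ order anywhere along $\gamma$. Hence the true loss $\mathcal L_x(\gamma(s))$ coincides with $\widetilde{\mathcal L}_x(\gamma(s))$ on the whole segment, so $\Delta\mathcal L_x=\widetilde{\mathcal L}_x(\Theta+\delta)-\widetilde{\mathcal L}_x(\Theta)$. This is where the route-change term is excluded; I expect it to be the main obstacle. Gate values may still change continuously while the support is frozen, so the argument must check two things. First, the renormalized top-$k$ softmax is smooth on the frozen support. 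Second, ``positive enough'' has to be read as: the margins stay positive along the entire segment, not just at the endpoints. I would state it in exactly that form.

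\textbf{Step 2: deterministic expansion.} Apply Taylor's theorem with Lagrange remainder to $\phi(s)=\widetilde{\mathcal L}_x(\gamma(s))$, which is $C^3$ by hypothesis. This gives $\phi(1)-\phi(0)=\phi'(0)+\tfrac12\phi''(0)+\tfrac16\phi'''(\bar s)$. Writing $\phi'(0)$ and $\phi''(0)$ in block coordinates produces the gradient sum and the Hessian double sum $\sum_{j,k}\delta_j^\top H_{jk}(x)\delta_k$. For the remainder, $|\phi'''(\bar s)|\le M_x\|\delta\|^3$, where $M_x$ bounds the third-derivative tensor norm along the segment. This yields Eq.~\eqref{eq:multilayer-taylor}.

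\textbf{Step 3: take expectations.} Under the centered assumption $\mathbb E[\delta_j]=0$, the gradient terms vanish, since $\nabla_j\mathcal L_x$ is deterministic at the reference. For the cross terms with $j\neq k$, block-uncorrelatedness gives $\mathbb E[\delta_j^\top H_{jk}\delta_k]=\operatorname{tr}(H_{jk}\,\mathbb E[\delta_k\delta_j^\top])=0$. For the diagonal terms, $\mathbb E[\delta_j^\top H_{jj}\delta_j]=\operatorname{tr}(H_{jj}\operatorname{Cov}(\delta_j))=2^{-2b_j}\operatorname{tr}(H_{jj}\Sigma_j)$. The remainder stays as $\mathbb E[R_{3,x}]$, which exists because it is bounded by $\tfrac{M_x}{6}\mathbb E\|\delta\|^3$; this requires a finite third moment, which I would add as part of the moment bounds. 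Together these give Eq.~\eqref{eq:multilayer-centered}.

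\textbf{Step 4: separability.} Eq.~\eqref{eq:multilayer-separability} is a definition of the residual $\varepsilon_j(x)$ rather than an assumption with content, so substituting it into Eq.~\eqref{eq:multilayer-centered} gives Eq.~\eqref{eq:multilayer-surrogate} by linearity. The substantive point is interpretive: the surrogate is only as good as $\varepsilon_j$ is small.
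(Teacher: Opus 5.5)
Your proposal is correct and follows the paper's own proof. You freeze the top-$k$ support using the positive-margin hypothesis, Taylor-expand the $C^3$ branch along the segment, let centering and block-uncorrelatedness remove the linear and off-diagonal terms, and then substitute the separability identity, which you rightly note only defines $\varepsilon_j(x)$. Your added caveats (margins positive along the whole segment, a finite third moment) are sensible. Because the segment depends on the random $\delta$, the expectation step also needs the margin condition to hold almost surely and $M_x$ to be a uniform bound over the support of $\delta$, so that $\mathbb E|R_{3,x}|\le \frac{M_x}{6}\mathbb E\|\delta\|^3$ holds.
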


\begin{proof}
Positive routing margins make the actual computation equal the reference-support branch along the perturbation segment. Multivariate Taylor expansion gives Eq.~\eqref{eq:multilayer-taylor}. Taking expectations removes centered linear terms and off-diagonal covariance terms, giving Eq.~\eqref{eq:multilayer-centered}; substituting Eq.~\eqref{eq:multilayer-separability} gives Eq.~\eqref{eq:multilayer-surrogate}.
\end{proof}

The theorem is conditional rather than architecture-wide. With biased or correlated deterministic quantization, the deviation from the surrogate additionally contains
\begin{equation}
E_{\mathrm{bias}}+E_{\mathrm{cross}}+E_{\mathrm{sep}}+E_{\mathrm{Taylor}}+E_{\mathrm{flip}},
\label{eq:multilayer-corrections}
\end{equation}
where \(E_{\mathrm{flip}}\) is a distinct route-support correction, not an ordinary Taylor remainder. A reference-precision router does not eliminate it because upstream expert perturbations change later hidden states and router logits. At W2, measured Qwen retention is \(95.7\%\) for top-1 and \(82.6\%\) for the full top-8 set, while a targeted two-block experiment has an \(8.5\%\) median relative additivity residual. These diagnostics support the mechanism operationally but do not validate small-noise or fixed-support assumptions at two bits.

\subsection{Within-Request FWP Weighting}

The deployed FWP normalization is within each request, so its amplification property should also be stated within a request. Let \(\mathbb E_x[\cdot]\) denote the average over non-padding prompt tokens, let \(v_i=(\bar\Lambda_i+\epsilon_w)^{-1}\), let \(w_i=v_i/\mathbb E_x[v]\), and let \(d_i\) be the matched offline token degradation.

\begin{proposition}[Within-request covariance identity]
\label{prop:model-fwp-amplification}
For every request with finite \(v_i\) and \(d_i\),
\begin{equation}
\mathbb E_x[w_i d_i]-\mathbb E_x[d_i]
=
\frac{\operatorname{Cov}_x(v_i,d_i)}{\mathbb E_x[v_i]}
=
\operatorname{Cov}_x(w_i,d_i).
\label{eq:model-fwp-dominance}
\end{equation}
Hence FWP amplifies prompt-average degradation if and only if inverse-router-norm weights and token degradation have positive within-request covariance.
\end{proposition}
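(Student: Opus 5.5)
The identity is pure algebra on the uniform empirical distribution over the non-padding prompt tokens $I_p(x)$, and the plan is to expand both sides directly. Throughout, $\mathbb E_x$ and $\operatorname{Cov}_x$ are taken over $i$ drawn uniformly from $I_p(x)$, with $\operatorname{Cov}_x(a,b)=\mathbb E_x[ab]-\mathbb E_x[a]\mathbb E_x[b]$.

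\textbf{Step 1: the weights are well defined and average to one.} Since $\epsilon_w>0$ and router norms are nonnegative, every $v_i=(\bar\Lambda_i+\epsilon_w)^{-1}$ is finite and strictly positive. Hence $\mathbb E_x[v]>0$, the normalization $w_i=v_i/\mathbb E_x[v]$ is legitimate, and $\mathbb E_x[w]=1$. This is the only place where positivity enters.

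\textbf{Step 2: the identity in $w$.} Write
\begin{equation*}
\mathbb E_x[w_i d_i]-\mathbb E_x[d_i]=\mathbb E_x[w_i d_i]-\mathbb E_x[w_i]\,\mathbb E_x[d_i],
\end{equation*}
using $\mathbb E_x[w]=1$. The right-hand side is $\operatorname{Cov}_x(w_i,d_i)$ by definition. Finiteness of the $d_i$, together with finiteness of the index set, makes every average finite.

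\textbf{Step 3: the form in $v$.} Covariance is bilinear, and $\mathbb E_x[v]$ is a constant within the request. Therefore
\begin{equation*}
\operatorname{Cov}_x(w_i,d_i)=\frac{\operatorname{Cov}_x(v_i,d_i)}{\mathbb E_x[v_i]}.
\end{equation*}

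\textbf{Step 4: the "if and only if" and the main-text version.} Because $\mathbb E_x[v]>0$, the sign of the FWP increment minus the plain average equals the sign of $\operatorname{Cov}_x(v_i,d_i)$. So FWP amplifies prompt-average degradation exactly when that covariance is positive.

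To identify this with Proposition~\ref{prop:model-fwp-amplification-main}, note that Eq.~\eqref{eq:fwp-delta} is $\mathbb E_x[w_i d_i]$. The weights in Eq.~\eqref{eq:fwp-weights} coincide with $w_i=v_i/\mathbb E_x[v]$ above, and $\Delta\overline\ell=\mathbb E_x[d_i]$.

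There is no real obstacle in this argument. The only care needed is to keep the covariance within a single request, so that $\mathbb E_x[v]$ acts as a constant, and to use $\epsilon_w>0$ to rule out division by zero.
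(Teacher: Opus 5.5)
Your proposal is correct and matches the paper's proof: both use $\mathbb E_x[w]=1$ to rewrite the difference as $\operatorname{Cov}_x(w,d)$, then substitute $w=v/\mathbb E_x[v]$ to get the $v$-form. Your check that $\epsilon_w>0$ gives $\mathbb E_x[v]>0$ simply spells out what the paper's ``by definition'' step assumes.
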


\begin{proof}
By definition, \(\mathbb E_x[w]=1\). Therefore
\[
\mathbb E_x[wd]-\mathbb E_x[d]
=
\mathbb E_x[wd]-\mathbb E_x[w]\mathbb E_x[d]
=
\operatorname{Cov}_x(w,d).
\]
Substituting \(w=v/\mathbb E_x[v]\) gives the first equality.
\end{proof}

This identity also exposes the boundary of the two-expert top-1 model. If every token in one request activates the same expert, \(v_i\) and \(w_i\) are constant, so FWP equals ordinary average NLL for that request. Population-level reweighting of rare and common requests is a different statistic and cannot prove amplification of deployed FWP. The realistic top-\(k\) experiments therefore test the covariance condition directly rather than treating it as a consequence of the two-expert mixture identity.
\section{Derivation of a Window-Level LP Routing Template}
\label{app:lp-proof}

This appendix derives the fixed-pool LP and its signed reduced-reward score. It separates the exact primal--dual statements from two implementation devices: projected price updates and quota tracking. The experiments instantiate offline population allocation and component timings; they do not claim to deploy the complete online dual-price controller.

\subsection{Primal and Dual Linear Programs}

Consider the LP in Eq.~\eqref{eq:window-lp}. For convenience, write the quality-constraint coefficient as
\begin{equation}
q_{ga}(t) := \losshat_{ga}(t) - \tau_g.
\end{equation}

\begin{proposition}[Dual LP and strong duality]
\label{prop:lp-dual}
For a fixed control window $t$, the dual of the primal LP in Eq.~\eqref{eq:window-lp} is
\begin{equation}
\begin{aligned}
\min_{\lambda,\pi,\nu \ge 0}
\quad &
\sum_{g \in \classes} \lambda_g B_g(t) + \sum_{a \in \instances} \pi_a C_a(t) \\
\text{s.t.}
\quad &
\lambda_g + \pi_a s_{ga}(t) + \nu_g q_{ga}(t) \ge L_g,
\qquad \forall g \in \classes,\ a \in \instances.
\end{aligned}
\label{eq:dual-program}
\end{equation}
Whenever Eq.~\eqref{eq:window-lp} is feasible and bounded, strong duality holds.
\end{proposition}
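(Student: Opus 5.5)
The plan is to derive the dual by the standard Lagrangian construction and then appeal to LP strong duality; no result from the earlier sections is needed beyond the primal in Eq.~\eqref{eq:window-lp}. First, I will rewrite the primal in canonical form $\max\{c^\top x : Ax\le b,\ x\ge 0\}$. The variable vector is $x=(x_{ga})_{g,a}$ and the objective coefficients are $c_{ga}=L_g$. The constraint matrix has three row blocks: demand rows $\sum_a x_{ga}\le B_g$ with multiplier $\lambda_g\ge 0$, capacity rows $\sum_g s_{ga}x_{ga}\le C_a$ with multiplier $\pi_a\ge 0$, and class-quality rows $\sum_a q_{ga}x_{ga}\le 0$ with multiplier $\nu_g\ge 0$. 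Since all primal constraints are inequalities of type $\le$ and $x\ge 0$, all dual variables are sign-constrained to be nonnegative.

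Next, I will form the Lagrangian and group terms by the coefficient of each variable:
\[
\mathcal L(x,\lambda,\pi,\nu)=\sum_g\lambda_gB_g+\sum_a\pi_aC_a+\sum_{g,a}x_{ga}\bigl(L_g-\lambda_g-\pi_as_{ga}-\nu_gq_{ga}\bigr).
\]
The quality row contributes $0\cdot\nu_g$ to the constant part. Taking $\sup_{x\ge0}$ gives a finite value exactly when every bracket is nonpositive, that is, when $\lambda_g+\pi_as_{ga}+\nu_gq_{ga}\ge L_g$ for all $(g,a)$. On that set the supremum equals $\sum_g\lambda_gB_g+\sum_a\pi_aC_a$. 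Minimizing over nonnegative multipliers then yields Eq.~\eqref{eq:dual-program}. Equivalently, the dual is $\min\{b^\top y: A^\top y\ge c,\ y\ge0\}$, and the column of $A$ for $x_{ga}$ has entries $1$, $s_{ga}$, $q_{ga}$ in the rows $g$, $a$, $g$ respectively. I will check this column read-off explicitly. One point needs care: $q_{ga}$ is signed. This does not matter, because the sign only enters the matrix entry, while the multiplier $\nu_g$ itself stays nonnegative since its row is an inequality.

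For strong duality, I will invoke the LP duality theorem: if the primal is feasible and its optimum is finite, then the dual is feasible and the two optimal values coincide. Feasibility of the primal is automatic here, because $x=0$ satisfies all constraints: $B_g,C_a\ge0$ and the quality rows read $0\le0$. Boundedness holds whenever the demand rows cap the flows, since each $x_{ga}\le B_g$. So the hypothesis ``feasible and bounded'' is in fact always met when demands are finite, and I will remark on this. Because LP duality needs no constraint qualification beyond feasibility and finiteness, no Slater-type condition is required, which matters because the quality rows may be tight at $0$.

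The main obstacle is bookkeeping rather than mathematics. I need the signs and the row-to-multiplier correspondence exactly right, so that the dual constraint contains $+\nu_gq_{ga}$, matching the reduced reward $R_{ga}=L_g-\pi_as_{ga}-\nu_gq_{ga}$ up to $\lambda_g$. Getting this right is what lets Theorem~\ref{thm:lp-consistency-main} follow afterwards via complementary slackness.
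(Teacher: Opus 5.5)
Your proposal is correct and follows the paper's route: rewrite Eq.~\eqref{eq:window-lp} as $\max\{c^\top x : Ax\le b,\ x\ge 0\}$, read off the standard dual with a nonnegative multiplier for each inequality block, and invoke LP strong duality. Your explicit Lagrangian and column read-off are correct additions that the paper leaves implicit. So is your observation that $x=0$ is feasible and that the demand rows bound the feasible region, which means the ``feasible and bounded'' hypothesis holds automatically when $B_g, C_a \ge 0$.
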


\begin{proof}
Write Eq.~\eqref{eq:window-lp} in the standard form
\begin{equation}
\max_{x \ge 0}\; c^\top x
\qquad
\text{s.t.}\quad
Ax \le b,
\end{equation}
where $x$ stacks the variables $x_{ga}(t)$, the vector $c$ collects the throughput coefficients $L_g$, and $A$ collects the demand, capacity, and quality-constraint coefficients. The stated dual is then the standard LP dual with nonnegative multipliers for each inequality block. Strong duality follows from the fundamental theorem of linear programming.
\end{proof}

Proposition~\ref{prop:lp-dual} is the OR foundation of the online rule. The prices $\lambda_g$, $\pi_a$, and $\nu_g$ are not free tuning terms; they are the exact dual multipliers of demand, capacity, and quality constraints.

\begin{proposition}[Shadow-price characterization of active routes]
\label{prop:lp-shadow}
Let $x^*$ solve the primal LP in Eq.~\eqref{eq:window-lp}, and let $(\lambda^*, \pi^*, \nu^*)$ solve the dual LP in Eq.~\eqref{eq:dual-program}. Then for every class-instance pair $(g,a)$,
\begin{equation}
x^*_{ga}(t) > 0
\quad \Longrightarrow \quad
L_g - \pi_a^* s_{ga}(t) - \nu_g^* q_{ga}(t) = \lambda_g^*.
\label{eq:shadow-equality}
\end{equation}
If $x^*_{ga}(t)=0$, then
\begin{equation}
L_g - \pi_a^* s_{ga}(t) - \nu_g^* q_{ga}(t) \le \lambda_g^*.
\label{eq:shadow-inequality}
\end{equation}
Consequently, a class may be split only across instances that maximize the reduced reward at the optimal shadow prices.
\end{proposition}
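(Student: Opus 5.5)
The plan is to derive both conditions from the LP optimality conditions, namely dual feasibility together with complementary slackness, applied to the primal--dual pair in Eq.~\eqref{eq:window-lp} and Eq.~\eqref{eq:dual-program}.

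\textbf{Step 1: optimal duality.} Since $x^*$ and $(\lambda^*,\pi^*,\nu^*)$ are optimal for the primal and dual respectively, both problems are feasible and bounded. Proposition~\ref{prop:lp-dual} then gives strong duality, so the two objective values coincide:
\[
\sum_{g,a}L_g x^*_{ga}=\sum_g\lambda_g^*B_g+\sum_a\pi_a^*C_a .
\]

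\textbf{Step 2: complementary slackness via the duality gap.} Write the gap as a sum of nonnegative terms:
\[
0=\sum_{g,a}x^*_{ga}\bigl(\lambda^*_g+\pi^*_a s_{ga}+\nu^*_g q_{ga}-L_g\bigr)
+\sum_g\lambda^*_g\Bigl(B_g-\sum_a x^*_{ga}\Bigr)
+\sum_a\pi^*_a\Bigl(C_a-\sum_g s_{ga}x^*_{ga}\Bigr)
+\sum_g\nu^*_g\Bigl(-\sum_a q_{ga}x^*_{ga}\Bigr).
\]
To get this identity, expand the right-hand side. The bilinear terms $\lambda^*_g x^*_{ga}$, $\pi^*_a s_{ga}x^*_{ga}$ and $\nu^*_g q_{ga}x^*_{ga}$ cancel in pairs, leaving exactly the dual objective minus the primal objective. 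That difference is zero by Step~1.

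Each summand is nonnegative. The first group is nonnegative because $x^*\ge0$ and the dual constraint holds. The remaining groups are nonnegative because the dual prices are nonnegative and the primal constraints are slack in the right direction.

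The one point needing care is the quality row. Its coefficients $q_{ga}$ are signed, but the row still reads $\sum_a q_{ga}x_{ga}\le 0$. So its slack $-\sum_a q_{ga}x^*_{ga}$ is nonnegative, and the sign of $q_{ga}$ causes no difficulty.

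A sum of nonnegative terms equal to zero forces every term to vanish. This is the only genuinely careful step, and once the decomposition above is written down it is routine.

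\textbf{Step 3: reading off the two conditions.} If $x^*_{ga}>0$, the vanishing of the first term forces equality in the dual constraint, i.e.\ $R_{ga}=L_g-\pi^*_a s_{ga}-\nu^*_g q_{ga}=\lambda^*_g$. This is Eq.~\eqref{eq:shadow-equality}.

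For any pair $(g,a)$, dual feasibility alone gives $R_{ga}\le\lambda^*_g$. This covers every pair with $x^*_{ga}=0$ and yields Eq.~\eqref{eq:shadow-inequality}.

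\textbf{Step 4: the consequence.} Combining Steps~3, every instance $a$ with $x^*_{ga}>0$ attains $R_{ga}=\lambda^*_g$. Every other instance satisfies $R_{ga}\le\lambda^*_g$. Hence $\max_a R_{ga}=\lambda^*_g$ whenever class $g$ carries any flow, and all positive-flow instances lie in $\arg\max_a R_{ga}$. So any split of a class occurs only among maximizers of the reduced reward at the optimal shadow prices.
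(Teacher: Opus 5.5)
Your proposal is correct and follows the paper's route: complementary slackness on the nonnegativity constraints forces equality in the dual constraint whenever $x^*_{ga}>0$, and dual feasibility gives the inequality for every other pair. The only difference is that you derive complementary slackness explicitly from the zero duality gap (which also shows the signed quality row causes no difficulty) and spell out the argmax consequence, whereas the paper simply invokes complementary slackness.
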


\begin{proof}
The complementary-slackness condition for the nonnegativity constraint on $x_{ga}(t)$ implies
\begin{equation}
x^*_{ga}(t)
\Bigl[
\lambda_g^* + \pi_a^* s_{ga}(t) + \nu_g^* q_{ga}(t) - L_g
\Bigr] = 0.
\end{equation}
If $x^*_{ga}(t)>0$, the bracketed term must vanish, which yields Eq.~\eqref{eq:shadow-equality}. If $x^*_{ga}(t)=0$, dual feasibility gives Eq.~\eqref{eq:shadow-inequality}.
\end{proof}

Proposition~\ref{prop:lp-shadow} is the class-level shadow-price characterization. It is the exact static analogue of the online rule: active routes are precisely those instances whose shadow-price-adjusted utility is maximal.

\subsection{Lagrangian Decomposition and One-Step Reduced Reward}

The corresponding Lagrangian is
\begin{equation}
\mathcal{L}(x, \lambda, \pi, \nu)
=
\sum_{g,a}
\Bigl[L_g - \lambda_g - \pi_a s_{ga}(t) - \nu_g q_{ga}(t)\Bigr]x_{ga}(t)
+ \sum_g \lambda_g B_g(t) + \sum_a \pi_a C_a(t).
\end{equation}

\begin{proposition}[Reduced reward decomposition]
\label{prop:reduced-reward}
Fix nonnegative dual variables $(\lambda, \pi, \nu)$. For any request class $g$, maximizing the Lagrangian contribution of a single incremental assignment is equivalent to choosing the instance with largest reduced reward
\begin{equation}
R_{ga}(t)
=
L_g - \pi_a(t)s_{ga}(t) - \nu_g(t)q_{ga}(t),
\end{equation}
up to the class-specific constant $\lambda_g$.
\end{proposition}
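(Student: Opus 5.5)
The argument reads the Lagrangian displayed just above the proposition as a sum of terms that are linear and separable in the flow variables. With the prices $(\lambda,\pi,\nu)\ge 0$ fixed, the only $x$-dependent part is
\[
\sum_{g,a}\bigl[L_g-\lambda_g-\pi_a s_{ga}(t)-\nu_g q_{ga}(t)\bigr]x_{ga}(t).
\]
The remaining terms, $\sum_g\lambda_g B_g(t)+\sum_a\pi_a C_a(t)$, do not depend on $x$.

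First, I will compute the marginal effect of one incremental assignment. Consider a single request of class $g$ sent to instance $a$, which increases $x_{ga}$ by one unit (or by $\varepsilon>0$). By linearity, the Lagrangian changes by exactly the coefficient $L_g-\lambda_g-\pi_a s_{ga}(t)-\nu_g q_{ga}(t)$, times $\varepsilon$ if a fractional increment is used. Because the Lagrangian is linear rather than merely smooth, this is an identity and not a first-order approximation. No other class or instance coefficient is affected, so the comparison is local to class $g$.

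Second, I will note that within class $g$ the term $-\lambda_g$ is the same for every candidate $a$. This is because the demand constraint $\sum_a x_{ga}\le B_g$ has a single multiplier per class. Hence
\[
\arg\max_a\bigl[R_{ga}(t)-\lambda_g\bigr]=\arg\max_a R_{ga}(t),
\]
and the maximized values differ by exactly the constant $\lambda_g$. This establishes the claimed equivalence "up to the class-specific constant." I will also remark that $q_{ga}$ is signed, so the $\nu_g q_{ga}$ term can raise the score of below-budget instances. Nothing in the argument needs $q_{ga}\ge 0$; only $\nu_g\ge 0$ enters, through dual feasibility.

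The main obstacle is interpretive rather than technical. The statement concerns maximizing the Lagrangian contribution of a single increment at fixed prices, not solving the primal LP. I will state explicitly that no primal feasibility is asserted here; that question is deferred to Theorem~\ref{thm:lp-consistency-main} and Proposition~\ref{prop:lp-shadow}. As a consistency check, at optimal prices the maximal reduced reward equals $\lambda_g^*$ on active routes, by Proposition~\ref{prop:lp-shadow}.
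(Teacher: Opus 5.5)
Your proposal is correct and follows essentially the same argument as the paper. The paper evaluates the $x$-dependent part of the Lagrangian on a single unit of class-$g$ traffic, written as a one-hot $\delta$ over instances, which is your unit increment of $x_{ga}$. It then observes that $\lambda_g$ does not depend on $a$, so the maximizer is the instance with largest $R_{ga}$.
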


\begin{proof}[Proof of Proposition~\ref{prop:reduced-reward}]
Fix nonnegative dual variables $(\lambda, \pi, \nu)$. Consider a single incremental assignment of one unit of class-$g$ traffic. Let $\delta_a \in \{0,1\}$ indicate whether that unit is sent to instance $a$, with $\sum_a \delta_a = 1$. The Lagrangian contribution of this one-step assignment is
\begin{equation}
\sum_{a \in \instances}
\Bigl[L_g - \lambda_g - \pi_a s_{ga}(t) - \nu_g q_{ga}(t)\Bigr]\delta_a.
\end{equation}
Since the class-specific term $\lambda_g$ does not depend on $a$, maximizing over the simplex is equivalent to choosing the instance with largest coefficient
\begin{equation}
R_{ga}(t) = L_g - \pi_a(t)s_{ga}(t) - \nu_g(t)q_{ga}(t).
\end{equation}
This is exactly the reduced reward stated in Proposition~\ref{prop:reduced-reward}.
\end{proof}

Proposition~\ref{prop:reduced-reward} is the one-step version of Proposition~\ref{prop:lp-shadow}. The multiplier $\lambda_g$ acts as a class-specific reservation value and drops out when comparing instances within the same class, which is why the online score only needs the capacity and quality prices.

\paragraph{Risk-formulation identity.}
The primal constraint, dual program, reduced reward, quota-tracking score, and pseudocode all use the same signed residual \(q_{ga}=\losshat_{ga}-\tau_g\). A one-sided hinge would remove the credit from below-budget assignments and define a different primal problem; no KKT claim in this paper applies to that alternative.
\subsection{Reduced Dual Objective and Price Updates}

The dual derivation can be pushed one step further by eliminating the class-demand multipliers. For fixed $(\pi,\nu)$, the smallest nonnegative $\lambda_g$ satisfying Eq.~\eqref{eq:dual-program} is
\begin{equation}
\lambda_g(\pi,\nu)
=
\Bigl[\max_{a \in \instances}\bigl\{L_g - \pi_a s_{ga}(t) - \nu_g q_{ga}(t)\bigr\}\Bigr]_+.
\label{eq:lambda-closed-form}
\end{equation}
Substituting Eq.~\eqref{eq:lambda-closed-form} into Eq.~\eqref{eq:dual-program} gives the reduced dual objective
\begin{equation}
\widetilde{D}_t(\pi,\nu)
=
\sum_{g \in \classes} B_g(t)
\Bigl[\max_{a \in \instances}\bigl\{L_g - \pi_a s_{ga}(t) - \nu_g q_{ga}(t)\bigr\}\Bigr]_+
+
\sum_{a \in \instances} \pi_a C_a(t).
\label{eq:reduced-dual-objective}
\end{equation}
Equation~\eqref{eq:reduced-dual-objective} shows that the dual optimization problem is entirely driven by reduced rewards.

\begin{lemma}[Subgradient characterization of price residuals]
\label{lem:dual-subgradient}
Let $x^*(\pi,\nu)$ be any allocation induced by active maximizers in Eq.~\eqref{eq:reduced-dual-objective}: for each class $g$, it places mass $B_g(t)$ on instances attaining a positive maximum reduced reward, places zero mass if the positive part is inactive, and may split ties arbitrarily. Then the vectors
\begin{align}
g_a^{\pi}(\pi,\nu)
&:=
C_a(t) - \sum_{g \in \classes} s_{ga}(t)x^*_{ga}(\pi,\nu), \\
g_g^{\nu}(\pi,\nu)
&:=
- \sum_{a \in \instances} q_{ga}(t)x^*_{ga}(\pi,\nu)
\end{align}
are subgradients of the reduced dual objective $\widetilde{D}_t(\pi,\nu)$.
\end{lemma}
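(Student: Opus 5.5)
We show that $\widetilde D_t$ is a pointwise maximum of affine functions of $(\pi,\nu)$ and identify the active affine piece. For each class $g$, write the reduced reward as
\[
R_{ga}(\pi,\nu)=L_g-\pi_a s_{ga}(t)-\nu_g q_{ga}(t).
\]
This is affine in $(\pi,\nu)$. Its gradient has $-s_{ga}(t)$ in the $\pi_a$ coordinate, $-q_{ga}(t)$ in the $\nu_g$ coordinate, and zeros elsewhere. The class term $h_g(\pi,\nu):=B_g(t)\bigl[\max_a R_{ga}(\pi,\nu)\bigr]_+$ is therefore $B_g(t)$ times a maximum over the finite family $\{0\}\cup\{R_{ga}\}_{a}$ of affine functions. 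Since $B_g(t)\ge 0$, it can be written as a supremum over the simplex:
\[
h_g(\pi,\nu)=\max\Bigl\{\textstyle\sum_a R_{ga}(\pi,\nu)\,x_{ga}\;:\;x_{ga}\ge 0,\ \sum_a x_{ga}\le B_g(t)\Bigr\}.
\]
In this representation, the maximizing sets are exactly the allocations described in Lemma~\ref{lem:dual-subgradient}:

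\begin{itemize}[leftmargin=*, itemsep=1pt, topsep=2pt]
\item If the maximum reduced reward is positive, all mass $B_g(t)$ sits on maximizers, with ties split arbitrarily.
\item If the maximum is negative, the mass is zero.
\item In the boundary case where the maximum equals $0$, any sub-demand split on maximizers attains the maximum. The lemma's choice of zero mass is then admissible.
\end{itemize}

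The core step is the standard supremum-of-affine subgradient inequality, which we prove directly rather than cite Danskin. Fix $(\pi,\nu)$ and the induced allocation $x^*=x^*(\pi,\nu)$. For any other prices $(\pi',\nu')$, feasibility of $x^*$ in the maximization defining $h_g(\pi',\nu')$ gives
\[
h_g(\pi',\nu')\ \ge\ \sum_a R_{ga}(\pi',\nu')x^*_{ga}.
\]
Because $x^*$ attains the maximum at $(\pi,\nu)$ and $R_{ga}$ is affine, this becomes
\[
h_g(\pi',\nu')\ \ge\ h_g(\pi,\nu)-\sum_a s_{ga}x^*_{ga}(\pi'_a-\pi_a)-\sum_a q_{ga}x^*_{ga}(\nu'_g-\nu_g).
\]
Next, sum over $g$ and add the linear term $\sum_a \pi_a C_a(t)$, whose gradient is $C_a(t)$. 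This yields
\[
\widetilde D_t(\pi',\nu')\ge \widetilde D_t(\pi,\nu)+\sum_a g^\pi_a(\pi'_a-\pi_a)+\sum_g g^\nu_g(\nu'_g-\nu_g),
\]
with $g^\pi$ and $g^\nu$ exactly as stated in the lemma. This is the definition of a subgradient of the convex function $\widetilde D_t$.

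The main obstacle is the bookkeeping of which allocations are admissible. The inequality requires $x^*$ to be a maximizer of the simplex representation at $(\pi,\nu)$, not merely feasible there. We will therefore check carefully that the lemma's three cases (positive maximum, inactive positive part, and arbitrary tie-splitting) all lie in the argmax set. We will also note that mixing ties preserves optimality because all tied terms share the same value.

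A secondary point is that the subgradient inequality need only hold on the domain $\pi,\nu\ge 0$. Since the affine representation is valid globally, this causes no difficulty. Convexity of $\widetilde D_t$ follows from the same representation, which justifies using the term subgradient.
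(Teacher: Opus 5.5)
Your proposal is correct and follows essentially the paper's route: write $\widetilde D_t$ as a pointwise maximum of affine functions of $(\pi,\nu)$ plus the linear capacity term, and take the gradient of the active piece. The only difference is that you verify the subgradient inequality directly, from feasibility and optimality of $x^*$ in the allocation-polytope representation, rather than citing Danskin's theorem; this also covers arbitrary tie-splits and the zero-maximum boundary case explicitly, which the paper's ``gradient of an active affine piece'' handles only implicitly through convexity of the subdifferential.
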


\begin{proof}
The reduced dual objective is a positive part of pointwise maxima of affine functions of $(\pi,\nu)$ plus the linear capacity term. By Danskin's theorem, any gradient of an active affine piece is a valid subgradient of $\widetilde{D}_t$. Holding the active allocation $x^*(\pi,\nu)$ fixed, differentiating with respect to $\pi_a$ gives $C_a(t)-\sum_g s_{ga}(t)x^*_{ga}(\pi,\nu)$, and differentiating with respect to $\nu_g$ gives $-\sum_a q_{ga}(t)x^*_{ga}(\pi,\nu)$.
\end{proof}

Lemma~\ref{lem:dual-subgradient} is the formal source of the dual price updates. A projected subgradient descent step on $\widetilde{D}_t$ has the form
\begin{align}
\pi_a^{(n+1)}
&=
\Bigl[\pi_a^{(n)} - \eta_n g_a^{\pi}(\pi^{(n)},\nu^{(n)})\Bigr]_+
=
\Bigl[\pi_a^{(n)} + \eta_n\bigl(\mathrm{UsedCapacity}_a^{(n)} - C_a\bigr)\Bigr]_+, \\
\nu_g^{(n+1)}
&=
\Bigl[\nu_g^{(n)} - \eta_n g_g^{\nu}(\pi^{(n)},\nu^{(n)})\Bigr]_+
=
\Bigl[\nu_g^{(n)} + \eta_n\sum_a q_{ga}x_{ga}^{*(n)}\Bigr]_+,
\label{eq:appendix-dual-step}
\end{align}
which is the exact optimization-theoretic analogue of the template price update. An online implementation may replace the class-level aggregate $\sum_a q_{ga}x_{ga}^{*(n)}$ by the realized average-loss residual in the window; this is simply a normalized stochastic estimate of the same subgradient component.

\begin{theorem}[Finite-time dual convergence rate]
\label{thm:dual-rate}
Fix a control window $t$ and assume that $(B_g(t), s_{ga}(t), q_{ga}(t), C_a(t))$ are stationary within that window. Let $\mathcal{K} \subseteq \mathbb{R}_+^{A+|\classes|}$ be a compact convex set containing an optimal dual solution $(\pi^*,\nu^*)$, and suppose that every subgradient of $\widetilde{D}_t$ on $\mathcal{K}$ has Euclidean norm at most $G$. Consider the projected subgradient iterations
\begin{equation}
(\pi^{(n+1)},\nu^{(n+1)})
=
\Pi_{\mathcal{K}}\Bigl((\pi^{(n)},\nu^{(n)}) - \eta_n g^{(n)}\Bigr),
\qquad
g^{(n)} \in \partial \widetilde{D}_t(\pi^{(n)},\nu^{(n)}).
\label{eq:projected-dual-descent}
\end{equation}
Then
\begin{equation}
\min_{1 \le n \le T}
\Bigl(\widetilde{D}_t(\pi^{(n)},\nu^{(n)}) - \widetilde{D}_t(\pi^*,\nu^*)\Bigr)
\le
\frac{\|(\pi^{(1)},\nu^{(1)})-(\pi^*,\nu^*)\|_2^2 + G^2\sum_{n=1}^{T}\eta_n^2}
{2\sum_{n=1}^{T}\eta_n}.
\label{eq:dual-rate-bound}
\end{equation}
In particular, if $\eta_n = \eta/\sqrt{n}$, then the bound is $\mathcal{O}((1+\log T)/\sqrt{T})$; with a horizon-tuned constant step size $\eta_n = \eta/\sqrt{T}$, the bound is $\mathcal{O}(T^{-1/2})$.
\end{theorem}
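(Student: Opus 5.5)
This is the standard projected subgradient analysis for convex nonsmooth minimization. I would apply it to $\widetilde{D}_t$ over $\mathcal{K}$ and track the squared distance to the dual optimum.

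\textbf{Step 1: convexity.} By Eq.~\eqref{eq:reduced-dual-objective}, $\widetilde{D}_t$ is a nonnegative combination of the functions $[\max_a\{\cdot\}]_+$ of affine maps in $(\pi,\nu)$, plus a linear term. It is therefore convex and finite on $\mathcal{K}$. Lemma~\ref{lem:dual-subgradient} guarantees that the subdifferential is nonempty at every iterate, so each $g^{(n)}$ is well defined. Write $z^{(n)}=(\pi^{(n)},\nu^{(n)})$ and $z^*=(\pi^*,\nu^*)\in\mathcal{K}$.

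\textbf{Step 2: one-step recursion.} Projection onto the closed convex set $\mathcal{K}$ is nonexpansive, and $z^*\in\mathcal{K}$. Hence
\[
\|z^{(n+1)}-z^*\|^2\le \|z^{(n)}-z^*\|^2-2\eta_n (g^{(n)})^\top(z^{(n)}-z^*)+\eta_n^2\|g^{(n)}\|^2 .
\]
The subgradient inequality gives $(g^{(n)})^\top(z^{(n)}-z^*)\ge \widetilde{D}_t(z^{(n)})-\widetilde{D}_t(z^*)$. Combining this with $\|g^{(n)}\|\le G$ yields
\[
2\eta_n\bigl(\widetilde{D}_t(z^{(n)})-\widetilde{D}_t(z^*)\bigr)\le \|z^{(n)}-z^*\|^2-\|z^{(n+1)}-z^*\|^2+\eta_n^2G^2 .
\]

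\textbf{Step 3: telescoping.} Summing over $n=1,\dots,T$ telescopes the distance terms, and dropping $-\|z^{(T+1)}-z^*\|^2\le 0$ leaves the squared initial distance plus $G^2\sum_n\eta_n^2$. On the left, each gap is bounded below by the minimum gap, so the left side is at least $2\sum_n\eta_n\cdot\min_n(\text{gap})$. Dividing by $2\sum_n\eta_n$ gives Eq.~\eqref{eq:dual-rate-bound}.

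\textbf{Step 4: rates.} For $\eta_n=\eta/\sqrt{n}$, use $\sum_{n\le T}\eta_n^2=\eta^2\sum 1/n\le\eta^2(1+\log T)$ and $\sum_{n\le T}\eta_n\ge \eta\sqrt{T}$. The right-hand side is then at most
\[
\frac{D_1^2+G^2\eta^2(1+\log T)}{2\eta\sqrt{T}}=\mathcal{O}\!\left(\frac{1+\log T}{\sqrt{T}}\right),
\]
where $D_1^2$ denotes the squared initial distance. For the constant step $\eta_n=\eta/\sqrt{T}$, we have $\sum\eta_n=\eta\sqrt{T}$ and $\sum\eta_n^2=\eta^2$, which gives $\mathcal{O}(T^{-1/2})$. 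In both cases $\eta$, $G$, and $D_1$ are treated as constants.

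\textbf{Main obstacle: the hypotheses, not the algebra.} The theorem places the optimum in $\mathcal{K}$ by assumption, so projection is harmless. The stationarity assumption is what makes $\widetilde{D}_t$ a single fixed function over the iterations. The subgradient norm bound $G$ is finite on compact $\mathcal{K}$ because the subgradients have bounded entries $C_a-\sum_g s_{ga}x_{ga}$ and $-\sum_a q_{ga}x_{ga}$ with $x_{ga}\le B_g$. I would note this explicitly, and I would also remark that the bound concerns dual suboptimality only. It does not by itself give primal feasibility of the induced allocation, which matches the caveat in Theorem~\ref{thm:lp-consistency-main}.
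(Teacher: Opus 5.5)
Your proposal is correct and follows the paper's proof essentially step for step: nonexpansiveness of $\Pi_{\mathcal{K}}$ with $(\pi^*,\nu^*)\in\mathcal{K}$, the subgradient inequality from convexity of $\widetilde{D}_t$, telescoping with $\|g^{(n)}\|_2\le G$, and bounding the $\eta_n$-weighted sum of gaps below by the minimum gap. Your Steps~1 and~4 only make explicit what the paper leaves implicit: the convexity of $\widetilde{D}_t$ (a nonnegative combination of positive parts of maxima of affine functions, plus a linear term), and the estimates $\sum_{n\le T}1/n\le 1+\log T$ and $\sum_{n\le T}\eta/\sqrt{n}\ge\eta\sqrt{T}$ behind the stated rates.
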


\begin{proof}
Nonexpansiveness of Euclidean projection gives
\begin{equation}
\begin{aligned}
\|(\pi^{(n+1)},\nu^{(n+1)})-(\pi^*,\nu^*)\|_2^2
&\le
\|(\pi^{(n)},\nu^{(n)})-(\pi^*,\nu^*)\|_2^2
- 2\eta_n\langle g^{(n)}, (\pi^{(n)},\nu^{(n)})-(\pi^*,\nu^*)\rangle
\\
&\quad
+ \eta_n^2 \|g^{(n)}\|_2^2.
\end{aligned}
\end{equation}
By convexity of $\widetilde{D}_t$,
\begin{equation}
\widetilde{D}_t(\pi^{(n)},\nu^{(n)}) - \widetilde{D}_t(\pi^*,\nu^*)
\le
\langle g^{(n)}, (\pi^{(n)},\nu^{(n)})-(\pi^*,\nu^*)\rangle.
\end{equation}
Substituting this bound into the previous inequality, summing from $n=1$ to $T$, and using $\|g^{(n)}\|_2 \le G$ yields
\begin{equation}
2\sum_{n=1}^{T} \eta_n
\Bigl(
\widetilde{D}_t(\pi^{(n)},\nu^{(n)})
- \widetilde{D}_t(\pi^*,\nu^*)
\Bigr)
\\
\le
\|(\pi^{(1)},\nu^{(1)})-(\pi^*,\nu^*)\|_2^2
+ G^2\sum_{n=1}^{T}\eta_n^2.
\end{equation}
Taking the minimum over $n \le T$ proves Eq.~\eqref{eq:dual-rate-bound}.
\end{proof}

Theorem~\ref{thm:dual-rate} makes the price dynamics fully technical: the online dual variables are not just interpretable as shadow prices; their dual objective gap obeys the standard projected-subgradient rate under stationary window statistics.

\subsection{Quota Tracking as an Implementation Regularizer}

Let $p_{ga}^{\star}(t)$ be the target routing share derived from the LP solution and let $\widetilde{p}_{ga}(t)$ be the current realized share inside the same control window. Consider the regularized one-step objective for a request of class $g$:
\begin{equation}
\max_{\delta \in \Delta_A}
\sum_{a \in \instances}
\Bigl[L_g - \pi_a s_{ga}(t) - \nu_g q_{ga}(t)\Bigr]\delta_a
- \frac{\beta}{2}
\sum_{a \in \instances}
\Bigl(\widetilde{p}_{ga}(t) + \delta_a - p_{ga}^{\star}(t)\Bigr)^2,
\label{eq:appendix-regularized-step}
\end{equation}
where $\Delta_A$ is the probability simplex over instances.

\begin{proposition}[Derivation of the final score]
\label{prop:appendix-final-score}
Define the quota deficit
\begin{equation}
d_{ga}(t) := p_{ga}^{\star}(t) - \widetilde{p}_{ga}(t).
\end{equation}
If the one-step assignment is integral, i.e., $\delta_a \in \{0,1\}$ and $\sum_a \delta_a = 1$, then maximizing Eq.~\eqref{eq:appendix-regularized-step} is equivalent to choosing the instance with largest score
\begin{equation}
S_{ga}(t)
=
\beta d_{ga}(t)
+ L_g
- \pi_a(t)s_{ga}(t)
- \nu_g(t)q_{ga}(t).
\end{equation}
\end{proposition}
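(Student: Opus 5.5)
The plan is to expand the quadratic penalty in Eq.~\eqref{eq:appendix-regularized-step} for each integral vertex of the simplex, and show that the resulting objective equals $S_{ga}(t)$ up to a constant that does not depend on the chosen instance. Fix a class $g$ and write $\delta = e_{a'}$ for the unit vector selecting instance $a'$. The linear part then contributes exactly $L_g - \pi_{a'} s_{ga'}(t) - \nu_g q_{ga'}(t) = R_{ga'}(t)$, exactly as in the proof of Proposition~\ref{prop:reduced-reward}.

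For the penalty, write $\widetilde p_{ga} + \delta_a - p^\star_{ga} = \delta_a - d_{ga}$. Then
\[
\sum_a (\delta_a - d_{ga})^2 = \sum_a d_{ga}^2 - 2\sum_a \delta_a d_{ga} + \sum_a \delta_a^2 .
\]
At an integral vertex, $\sum_a \delta_a^2 = 1$ and $\sum_a \delta_a d_{ga} = d_{ga'}$. The penalty therefore equals $\tfrac{\beta}{2}\bigl(\sum_a d_{ga}^2 + 1\bigr) - \beta d_{ga'}$. The bracketed term is common to every choice of $a'$.

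Combining the two parts, the regularized objective at $e_{a'}$ is $S_{ga'}(t) - \tfrac{\beta}{2}\bigl(1 + \sum_a d_{ga}^2\bigr)$. Maximizing over the integral vertices is thus the same as maximizing $S_{ga'}(t)$, with the same argmax set and therefore the same tie structure.

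The only subtle point is the role of integrality. Over the full simplex the identity $\sum_a \delta_a^2 = 1$ fails, so the objective becomes strictly concave and may be maximized at an interior point. This is why the statement restricts to $\delta_a\in\{0,1\}$. I would add a remark that without this restriction the equivalence to the linear score does not hold. Beyond that, the argument is routine algebra.
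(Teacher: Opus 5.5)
Your proof is correct and essentially the paper's argument: fix the integral vertex $\delta=e_{a'}$, expand the quadratic penalty, note that $-\frac{\beta}{2}\sum_a(\widetilde p_{ga}-p^\star_{ga})^2$ and $-\frac{\beta}{2}$ do not depend on the chosen instance, and identify the remaining cross term as $\beta d_{ga'}$, so the objective equals $S_{ga'}(t)$ minus a constant. Your added remark that $\sum_a\delta_a^2=1$ is exactly where integrality enters, and that this identity fails off the vertices for $\beta>0$, is a correct clarification the paper leaves implicit.
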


\begin{proof}
Fix an action $a$ and set $\delta_a = 1$, $\delta_{a'} = 0$ for $a' \neq a$. Expanding the quadratic term in Eq.~\eqref{eq:appendix-regularized-step} gives
\begin{equation}
-\frac{\beta}{2}
\sum_{a' \in \instances}
\Bigl(\widetilde{p}_{ga'} - p_{ga'}^{\star}\Bigr)^2
- \beta\bigl(\widetilde{p}_{ga} - p_{ga}^{\star}\bigr)
- \frac{\beta}{2}.
\end{equation}
The first and third terms are independent of the chosen action and therefore do not affect the maximizer. Using $d_{ga} = p_{ga}^{\star} - \widetilde{p}_{ga}$, the action-dependent contribution reduces to $\beta d_{ga}$. Combining this with the reward, capacity price, and slack penalty gives the stated score.
\end{proof}

Proposition~\ref{prop:appendix-final-score} derives the tracking term from a quadratic penalty around LP target shares. The signed quality coefficient remains identical to the primal LP. For $\beta>0$ this is a regularized implementation score; only the $\beta=0$ score is the exact Lagrangian reduced reward.

\subsection{Exact KKT Scope and Tie Allocation}

The dual score characterizes which routes may carry positive primal flow, but it does not by itself choose feasible proportions among tied maximizers.

\begin{theorem}[KKT consistency with the LP optimum]
\label{thm:lp-consistency}
Let \(x^*\) solve Eq.~\eqref{eq:window-lp} and let
\((\lambda^*,\pi^*,\nu^*)\) solve Eq.~\eqref{eq:dual-program}. Then:
\begin{enumerate}[leftmargin=*, itemsep=2pt, topsep=2pt]
\item every pair with \(x^*_{ga}>0\) maximizes
\(R_{ga}=L_g-\pi_a^*s_{ga}-\nu_g^*q_{ga}\) over \(a\);
\item assigning class-\(g\) traffic among those maximizers in the proportions of \(x^*\) recovers a primal-feasible allocation and is KKT-consistent with the LP optimum;
\item an arbitrary greedy tie rule satisfies the reduced-reward stationarity condition but need not satisfy demand, capacity, or quality feasibility.
\end{enumerate}
\end{theorem}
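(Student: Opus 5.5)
The plan is to derive all three items directly from LP complementary slackness, using Proposition~\ref{prop:lp-dual} for strong duality and Proposition~\ref{prop:lp-shadow} for the characterization of active routes.

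\emph{Item 1.} Combine Eq.~\eqref{eq:shadow-equality} and Eq.~\eqref{eq:shadow-inequality} at the fixed optimal prices $(\pi^*,\nu^*)$. For a fixed class $g$, every $a$ satisfies $R_{ga}\le\lambda_g^*$. Every $a$ with $x^*_{ga}>0$ satisfies $R_{ga}=\lambda_g^*$. Hence any such $a$ attains $\max_{a'}R_{ga'}$. No further argument is needed beyond noting that $\lambda_g^*$ is independent of $a$.

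\emph{Item 2.} Let the class-$g$ traffic actually routed be $\sum_a x^*_{ga}$. Assigning it to the maximizers in the proportions $x^*_{ga}/\sum_{a'}x^*_{ga'}$ reproduces exactly $x^*$. This holds with the convention that classes with zero total flow receive nothing. Primal feasibility is then inherited from $x^*$: demand, capacity and signed quality constraints hold because the allocation \emph{is} $x^*$.

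To establish KKT consistency, I would list the four conditions for the pair $(x^*,(\lambda^*,\pi^*,\nu^*))$:
\begin{itemize}
\item primal feasibility, which was just shown;
\item dual feasibility, which is given;
\item stationarity/reduced-cost complementary slackness, which is item 1;
\item complementary slackness of the demand, capacity, and quality rows, namely $\lambda_g^*(B_g-\sum_a x^*_{ga})=0$, $\pi_a^*(C_a-\sum_g s_{ga}x^*_{ga})=0$, and $\nu_g^*\sum_a q_{ga}x^*_{ga}=0$.
\end{itemize}
The last set follows from strong duality (Proposition~\ref{prop:lp-dual}). Equal primal and dual objectives, combined with the weak-duality chain $\sum_{g,a}L_gx_{ga}\le\sum_{g,a}(\lambda_g+\pi_as_{ga}+\nu_gq_{ga})x_{ga}\le\sum_g\lambda_gB_g+\sum_a\pi_aC_a$, force every intermediate inequality to be tight termwise, since each term is a product of nonnegative factors.

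\emph{Item 3.} Stationarity of an arbitrary argmax holds by item 1 and Proposition~\ref{prop:reduced-reward}. Feasibility can fail, and I would show this with a small counterexample. Take one class and two instances that tie at the optimal prices, where $x^*$ splits flow because one capacity binds. Sending all flow to the capacity-bound instance violates $\sum_g s_{ga}x_{ga}\le C_a$. Alternatively, sending all flow to the instance with $q_{ga}>0$ violates the signed quality row. A one-class, two-instance instance with explicit numbers (e.g.\ $L=1$, $s=1$, $C_1=C_2=1$, $B=2$) suffices.

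The main obstacle is the bookkeeping in item 2. One must be careful that ``recovers'' means reproducing $x^*$ class by class, including classes whose demand is not fully served, where the positive part in Eq.~\eqref{eq:lambda-closed-form} may be zero. The row complementary slackness must also be derived from strong duality rather than assumed.
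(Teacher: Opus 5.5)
Your proposal is correct and follows the paper's proof. Item 1 comes from Proposition~\ref{prop:lp-shadow}; item 2 reproduces \(x^*\), so primal feasibility is inherited and strong duality with complementary slackness supplies the remaining KKT conditions; item 3 rests on the fact that ties do not determine the split. You make explicit what the paper's three-line proof leaves implicit: the termwise tightness of the weak-duality chain that gives row complementary slackness, the convention that the routed class-\(g\) mass is \(\sum_a x^*_{ga}\) rather than \(B_g\), and a concrete numerical counterexample showing that a greedy tie rule need not be feasible.
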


\begin{proof}
The first statement is Proposition~\ref{prop:lp-shadow}. The second uses the primal-optimal proportions and therefore inherits primal feasibility, while strong duality and complementary slackness supply the remaining KKT conditions. The third follows because stationarity restricts support but does not determine how mass is split among tied support points.
\end{proof}

Projected subgradient convergence supplies optimal prices only under the stationarity, bounded-subgradient, step-size, and projection assumptions of Theorem~\ref{thm:dual-rate}. The quota term in Proposition~\ref{prop:appendix-final-score} is useful for tracking a target split, but KKT consistency for \(\beta>0\) additionally requires the realized proportions to converge to a primal-feasible optimum. We do not infer that property from a vanishing score perturbation alone.
\subsection{Calibration Error and Workload-Conditioned Utility Loss}

The last link is from quality-prediction error to the workload-conditioned utility in Eq.~\eqref{eq:wcu-utility}. For this step, consider the linearized one-step utility
\begin{equation}
u_{ga}(t) := L_g - \alpha_q \ell_{ga}(t) - \beta s_{ga}(t),
\label{eq:appendix-utility-true}
\end{equation}
where $\ell_{ga}(t)$ is the true average quality loss of assigning class $g$ to instance $a$ in window $t$. Let the predicted utility be
\begin{equation}
\widehat{u}_{ga}(t) := L_g - \alpha_q \losshat_{ga}(t) - \beta s_{ga}(t).
\label{eq:appendix-utility-est}
\end{equation}

\begin{proposition}[Prediction error implies one-step utility regret]
\label{prop:utility-regret}
Assume that for a fixed window,
\begin{equation}
\sup_{g \in \classes,\ a \in \instances}
|\losshat_{ga}(t) - \ell_{ga}(t)| \le \varepsilon.
\label{eq:appendix-calibration-error}
\end{equation}
Let
\begin{equation}
a_g^*(t) \in \arg\max_{a \in \instances} u_{ga}(t),
\qquad
\widehat{a}_g(t) \in \arg\max_{a \in \instances} \widehat{u}_{ga}(t).
\end{equation}
Then the one-step utility loss of acting on the predicted surrogate satisfies
\begin{equation}
0
\le
u_{g a_g^*(t)}(t) - u_{g \widehat{a}_g(t)}(t)
\le
2\alpha_q \varepsilon.
\label{eq:appendix-utility-regret}
\end{equation}
Consequently, over any horizon of $T$ requests whose classes are evaluated with the same error bound, the cumulative regret is at most $2\alpha_q \varepsilon T$.
\end{proposition}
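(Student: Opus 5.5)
The proof is the standard "plug-in argmax" argument: compare true and predicted utilities pointwise, then chain inequalities through the two maximizers. The key observation is that $u_{ga}(t)$ and $\widehat{u}_{ga}(t)$ share the throughput and service-cost terms $L_g - \beta s_{ga}(t)$ and differ only in the quality term, so
\[
\widehat{u}_{ga}(t) - u_{ga}(t) = -\alpha_q\bigl(\losshat_{ga}(t)-\ell_{ga}(t)\bigr).
\]
Under the calibration bound \eqref{eq:appendix-calibration-error}, this gives the uniform estimate $|\widehat{u}_{ga}(t) - u_{ga}(t)| \le \alpha_q\varepsilon$ for every class $g$ and instance $a$. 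This requires $\alpha_q \ge 0$, which is implicit in the utility \eqref{eq:wcu-utility}.

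The lower bound $0 \le u_{g a_g^*}(t) - u_{g\widehat{a}_g}(t)$ is immediate, since $a_g^*$ maximizes $u_{g\cdot}(t)$ over $\instances$. For the upper bound I write the telescoping decomposition
\[
u_{ga^*} - u_{g\widehat a} = (u_{ga^*} - \widehat u_{ga^*}) + (\widehat u_{ga^*} - \widehat u_{g\widehat a}) + (\widehat u_{g\widehat a} - u_{g\widehat a}).
\]
The middle term is $\le 0$ because $\widehat{a}_g$ maximizes $\widehat{u}_{g\cdot}(t)$. Each outer term is at most $\alpha_q\varepsilon$ by the uniform estimate, so the sum is at most $2\alpha_q\varepsilon$, which is \eqref{eq:appendix-utility-regret}.

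For the horizon statement, I apply the one-step bound to each request $s=1,\dots,T$ with its class $g_s$ and window. The hypothesis is that the same $\varepsilon$ bound holds in each of those windows. Summing the per-step bounds gives $\sum_s (u_{g_s a^*} - u_{g_s \widehat a}) \le 2\alpha_q\varepsilon T$.

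There is no real obstacle here. The only point needing care is stating the assumptions explicitly: $\alpha_q \ge 0$, the argmax sets are nonempty (automatic because $\instances$ is finite), and the calibration bound holds uniformly in every window summed over. The cumulative statement should also be read as regret relative to the per-request utility-optimal action, not relative to an LP-feasible allocation, since this linearized utility ignores capacity coupling.
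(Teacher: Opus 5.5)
Your proposal is correct and follows the paper's proof essentially verbatim: the uniform bound $|u_{ga}(t)-\widehat{u}_{ga}(t)|\le\alpha_q\varepsilon$, the three-term decomposition through $\widehat{u}$ with a nonpositive middle term, and summation of the per-step bound over the $T$ requests. Stating $\alpha_q\ge 0$ and the per-window validity of the $\varepsilon$ bound explicitly is a small improvement, since the paper uses both only implicitly.
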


\begin{proof}
For any action $a$,
\begin{equation}
|u_{ga}(t) - \widehat{u}_{ga}(t)|
=
\alpha_q |\losshat_{ga}(t) - \ell_{ga}(t)|
\le
\alpha_q \varepsilon.
\end{equation}
Therefore,
\begin{align}
u_{g a_g^*(t)}(t) - u_{g \widehat{a}_g(t)}(t)
={}&
\bigl[u_{g a_g^*(t)}(t) - \widehat{u}_{g a_g^*(t)}(t)\bigr] \\
&{}+
\bigl[\widehat{u}_{g a_g^*(t)}(t) - \widehat{u}_{g \widehat{a}_g(t)}(t)\bigr] \\
&{}+
\bigl[\widehat{u}_{g \widehat{a}_g(t)}(t) - u_{g \widehat{a}_g(t)}(t)\bigr].
\end{align}
The middle term is nonpositive because $\widehat{a}_g(t)$ maximizes $\widehat{u}_{ga}(t)$. The first and third terms are each bounded above by $\alpha_q \varepsilon$, which proves Eq.~\eqref{eq:appendix-utility-regret}. Summing over $T$ requests yields the cumulative statement.
\end{proof}

Proposition~\ref{prop:utility-regret} connects calibrated request-risk error to routing utility. If the selected prompt-side predictor is uniformly calibrated within error $\varepsilon$, the induced one-step decision loses at most $2\alpha_q\varepsilon$ in the linearized workload-conditioned objective. The result is conditional on the calibration bound; current held-out compliance does not establish such a production bound.

This appendix establishes a claim hierarchy rather than one architecture-wide guarantee. The two-expert mixture identity is exact; the centered reduced form is second order; the multi-layer top-$k$ bridge is conditional with explicit corrections; FWP amplification is a within-request covariance condition; and the signed LP score is KKT-consistent with the LP optimum only under optimal prices and primal-feasible tie allocation. The empirical headline remains the offline fixed-population comparison in Table~\ref{tab:routing-fractions}.

\section{Detailed Evaluation Protocol}
\label{app:eval-protocol}

\begin{table*}[t]
\centering
\small
\setlength{\tabcolsep}{5pt}
\caption{Evidence classes used in this paper. Results are compared only within a row unless an explicit bridge is stated.}
\label{tab:headline-evidence}
\begin{tabular}{@{}>{\raggedright\arraybackslash}p{0.19\textwidth}>{\raggedright\arraybackslash}p{0.18\textwidth}>{\raggedright\arraybackslash}p{0.24\textwidth}>{\raggedright\arraybackslash}p{0.31\textwidth}@{}}
\toprule
\textbf{Evidence class} & \textbf{Model and scope} & \textbf{Protocol} & \textbf{Permitted claim} \\
\midrule
Complete-instance quality
& Qwen; all 6,144 expert blocks
& HQQ W2/W3/W4; 88 extended prompts
& Request-level quality distributions and fixed-population allocation inputs. \\
Sampled structural diagnostics
& Qwen and DeepSeek; selected blocks/layers
& LOEO, route agreement, additivity, 65-configuration ranking
& Mechanism and theory-boundary evidence, not complete-instance cross-model quality. \\
Offline population allocation
& Qwen; same 88 requests and complete losses
& \(\tau=0.1513\); analytical factors \(2,4/3,1\)
& Model-based allocation multiplier, not live throughput. \\
Prompt-only FWP
& Qwen; reference W2 score predicting W3 risk
& Cross-instance Spearman/AUROC and prefix sweep
& Calibration feature and abstention boundary, not direct online quality measurement. \\
Endpoint capacity
& Qwen; same two-A100 budget
& Separate serving checkpoints; 216 requests, five repeats
& Endpoint rate vector for that backend, not quality-aligned routed-pool throughput. \\
Live routed prototype
& Three serving GPUs plus one INT3 scorer GPU
& Repeated workload; threshold dispatch
& Feasibility diagnostic only; not an equal-total-resource replica comparison. \\
\bottomrule
\end{tabular}
\end{table*}

\paragraph{Complete-instance quality.}
The extended population contains 32 OpenBookQA prompts, 32 MBPP prompts, and 24 CNN/DailyMail prompts. For each prompt and each W2/W3/W4 instance, average NLL is compared with the reference checkpoint. All 6,144 Qwen expert blocks are quantized; the resulting request rows produce Table~\ref{tab:complete-instance-quality}. The sampled LOEO and 65-configuration studies remain separate because nonsampled experts stay at reference precision in those artifacts.

\paragraph{Fixed-population allocation.}
Every policy in Table~\ref{tab:routing-fractions} receives the same 88 request-level W2/W3/W4 losses, budget \(\tau=0.1513\), and analytical per-request multipliers \(2,4/3,1\). Static and capacity-proportional policies use no request information. The request-agnostic budget-aware mixture chooses one global W3/W4 mixture satisfying the population budget. Sequence length, activation frequency, and FWP supply alternative request orderings. The restricted reference orders requests by measured W3 loss after evaluation and is not deployable. No row models queueing, batching, scorer contention, or reconfiguration.

\paragraph{Prompt-only calibration and compliance.}
Same-instance FWP is reported only as a measurement sanity check. The deployable diagnostic uses the raw W2 prompt FWP and prompt-observable features to predict W3 request loss, with Spearman \(0.586\) and AUROC \(0.747\). Prefix FWP is recomputed at 64, 128, 256, 512, and 1,024 tokens. Calibration, threshold selection, and final evaluation must use disjoint data; the current small repeated 50/50 splits achieve only about \(35\)--\(50\%\) budget-compliant splits and therefore do not satisfy a deployment acceptance gate.

\paragraph{Memory and reconfiguration.}
Memory values in Table~\ref{tab:memory-accounting} are parameter estimates derived from expert and non-expert parameter counts on one accounting basis. They exclude KV cache, activations, runtime workspaces, fragmentation, and scorer residence. The reconfiguration coefficients in Eq.~\eqref{eq:reconfiguration-cost} are formal placeholders until weight-load, eviction, and placement times are measured on the target backend. The DeepSeek-V3 ten-minute redundant-expert example motivates the timescale separation but does not calibrate these coefficients.

\paragraph{Endpoint and overhead protocols.}
The same-two-A100 endpoint test disables prefix caching and repeats 216 requests five times; 128 requests have 1,023 prompt tokens and 128 output tokens. Reported values are mean output tokens/s with population standard deviation. FWP instrumentation is timed within one transformers prefill at 512 and 1,024 tokens. LP times use a CPU synthetic control-window benchmark, and score time covers three candidate comparisons. None is labeled end-to-end routing overhead.

\paragraph{Reference routing procedure.}
Algorithm~\ref{alg:lightweight-routing} states the execution path consistent with the theory and evidence. It is a reference design: the complete online price controller and delayed quality-feedback path have not been deployed in the live prototype.

\begin{figure}[t]
\renewcommand{\figurename}{Algorithm}
\centering
\framebox[\textwidth]{%
\begin{minipage}{0.94\textwidth}
\small
\noindent\textbf{Fixed-pool prompt-risk routing}

\noindent\textbf{Input:} immutable resident pool \(\instances\), reference instance \(a_{\mathrm{ref}}\), calibrated risk maps \(\widehat r_a(z,|I_p|,g)\), service costs \(s_{ga}\) including target re-prefill when required, prices \((\pi,\nu)\), quality budget \(\tau_g\), validated prefix threshold \(T_{\min}\), conservative fallback \(a_{\mathrm{safe}}\)

\vspace{3pt}
\noindent\textbf{Offline and once per provisioning epoch:}
\begin{enumerate}[leftmargin=1.2em, itemsep=0pt, topsep=0pt, label=\arabic*.]
\item Lock \(y^{(q)}\), checkpoint hashes, endpoint capacities, memory footprints, and the reconfiguration record.
\item Split task families into training, calibration, test, and OOD audits; fit \(\widehat r_a\) from raw reference-instance prompt FWP without a same-request full-precision input.
\item Select \(T_{\min}\), uncertainty thresholds, and \(a_{\mathrm{safe}}\) on calibration data only.
\item Solve the window LP or publish a price/target-share snapshot; record convergence and feasibility residuals.
\end{enumerate}

\vspace{3pt}
\noindent\textbf{Online for request \(x\):}
\begin{enumerate}[leftmargin=1.2em, itemsep=0pt, topsep=0pt, label=\arabic*.]
\setcounter{enumi}{4}
\item If \(|I_p(x)|<T_{\min}\) or the request is outside calibration support, send the complete request to \(a_{\mathrm{safe}}\).
\item Otherwise run the prompt prefill on \(a_{\mathrm{ref}}\) and compute \(z(x)=\ell_{\FWP}(x;\bits^{(a_{\mathrm{ref}})})\) over non-padding prompt tokens.
\item For each resident endpoint \(a\), compute \(\losshat_a(x)=\widehat r_a(z(x),|I_p(x)|,g)\) and
\[
S_a=L_g-\pi_a s_{ga}-\nu_g\bigl(\losshat_a(x)-\tau_g\bigr)+\beta d_{ga}.
\]
\item Choose \(a^*\in\arg\max_a S_a\), using the declared quota rule for ties.
\item If \(a^*\ne a_{\mathrm{ref}}\), discard the incompatible reference KV state and run a target prefill on \(a^*\); otherwise reuse the reference KV state.
\item Decode the complete response on \(a^*\) and log service cost, fallback status, and calibration diagnostics.
\end{enumerate}

\vspace{3pt}
\noindent\textbf{Once per control window:}
\begin{enumerate}[leftmargin=1.2em, itemsep=0pt, topsep=0pt, label=\arabic*.]
\setcounter{enumi}{10}
\item Update capacity prices from realized utilization. Update quality prices only from a declared delayed-audit or conservative risk-control signal; true free-running loss is not an immediate online observable.
\end{enumerate}
\end{minipage}}
\caption{Reference procedure for routing within a fixed resident pool. Its data-plane score is \(O(|\instances|)\); reference prefill, possible target re-prefill, and control-plane work are separately charged.}
\label{alg:lightweight-routing}
\end{figure}
\end{document}